\documentclass{article}
\usepackage{iclr2027_conference,times}
\usepackage{xcolor}
\usepackage{amsmath,amssymb,mathtools,amsthm}
\usepackage{graphicx}
\usepackage{booktabs}
\usepackage{longtable}
\usepackage{pdflscape}
\usepackage{array}
\usepackage{tabularx}
\usepackage{enumitem}
\usepackage{microtype}
\usepackage{xspace}
\usepackage{url}
\usepackage[colorlinks=true,allcolors=blue,hypertexnames=false]{hyperref}
\usepackage[nameinlink,noabbrev]{cleveref}
\usepackage{autoresearch_algorithm}
\usepackage{float}
\usepackage{placeins}

\newtheorem{proposition}{Proposition}
\newcommand{\Build}{\operatorname{Build}}
\newcommand{\Run}{\operatorname{Run}}

\setlist[itemize]{leftmargin=*,topsep=2pt,itemsep=1pt,parsep=0pt}
\setlist[enumerate]{leftmargin=*,topsep=2pt,itemsep=1pt,parsep=0pt}
\renewcommand{\arraystretch}{1.08}

\title{DAGLoop: Aligning Method Design\\with Experimental Feedback\\in Automated AI Research}
\iclrfinalcopy
\author{%
  \normalfont\normalsize
  Dengyuxing Wang\textsuperscript{1}\quad
  Yonggang Zhang\textsuperscript{1,3}\quad
  Qianshu Cai\textsuperscript{1,4}\\[4pt]
  Huajiang Zheng\textsuperscript{1,3}\quad
  Jun Song\textsuperscript{2}\quad
  Yangqiu Song\textsuperscript{1}\quad
  Yike Guo\textsuperscript{1,3}\\[7pt]
  {\small\textsuperscript{1}The Hong Kong University of Science and Technology}\\
  {\small\textsuperscript{2}Hong Kong Baptist University}\\
  {\small\textsuperscript{3}Hong Kong Generative AI Research \& Development Center}\\
  {\small\textsuperscript{4}University of Science and Technology of China}\\[3pt]
}

\begin{document}
\maketitle

\begin{abstract}
Automated AI research agents aim to improve machine learning methods by iteratively proposing, implementing, and evaluating candidate methods. Existing work uses the outcome of each evaluation to determine how the next candidate should be revised for performance improvement. However, a candidate often combines multiple design decisions, and its overall outcome alone does not reveal which decisions produced their expected effects, whether their required conditions were satisfied, or which decisions should be retained or revised. Inspired by ablation studies in the machine learning community, we propose DAGLoop to test the expected effect of a design decision by comparing implementations with and without that decision while preserving the other decisions it requires. Specifically, DAGLoop translates each design decision’s rationale into a testable expectation and pairs it with a targeted experiment, forming an experimental specification for assessing the expected effect under its required conditions. Using these specifications, we formulate ablation planning as the construction of a directed acyclic graph (DAG) from the candidate method’s design: nodes associate decisions with tests, and edges encode dependencies. The DAG specifies which decisions each experiment must retain and supports joint tests when an expected effect requires several decisions to act together. Outcomes are recorded with the tested decisions, expectations, and conditions to form structured research memory. This memory guides the agent in retaining experimentally supported decisions, revising unsupported ones, and planning further experiments when the evidence is unresolved. Experiments on dozens of research tasks demonstrate substantial performance improvements achieved by DAGLoop across diverse executable research settings. Code and evaluation scripts are available at \url{https://github.com/deng34567/DAGLoop}.
\end{abstract}

\section{Introduction}

Improving a machine learning method requires repeated decisions about what to change, what to test, and how to interpret the result. Recent research agents automate this process by proposing candidate methods, implementing them, running experiments, and using the outcomes to guide subsequent revisions. MLAgentBench studies agents that conduct machine learning experiments, while The AI Scientist extends automation to longer research workflows and agentic tree search \citep{huang2023mlagentbench,lu2026endtoend,yamada2025aiscientistv2}. AlphaEvolve uses executable evaluators to guide program and algorithm discovery \citep{novikov2025alphaevolve}. We study the executable method-improvement setting: given a research task, a baseline, an experimental environment, and a finite budget, an agent seeks to improve performance under a fixed task evaluator.

Experimental feedback determines which candidates the agent explores next. Optimization by PROmpting (OPRO) generates candidates from scored solutions; FunSearch selects evaluated programs for further search \citep{yang2024opro,romeraparedes2024funsearch}. Richer feedback includes MetaReflection's memory of past reflections and Trace's execution traces for parameter updates \citep{gupta2024metareflection,cheng2024trace}. Our focus is the relation between this feedback and the design decisions being revised. A candidate may combine a data transformation, a representation, an objective, and an optimization rule. Its overall performance measures their combined effect, but does not establish which decisions produced their expected effects or whether their required conditions held. The evaluator scores the complete candidate, whereas revision must determine which constituent decisions to retain, revise, or test further.

Ablation studies examine decisions by comparing implementations with and without them; recent benchmarks study whether language models can plan informative ablations \citep{zhao2025abgen,abramovich2025ablationbench}. However, testing each decision independently can remove the conditions needed for its effect. Suppose $z_1$ constructs a representation and $z_2$ introduces an objective that uses it. The representation may provide a necessary intermediate property without improving the task metric alone, while the objective may be meaningless without it. Rejecting $z_1$ for lacking standalone gain prevents a meaningful test of $z_2$; evaluating only their composition leaves each decision's contribution unresolved. Decision-level evidence therefore requires experiments that preserve the prerequisites of the decision being tested (Figure~\ref{fig:feedback_alignment}).

\begin{figure}[t]
  \centering
  \includegraphics[width=0.95\linewidth]{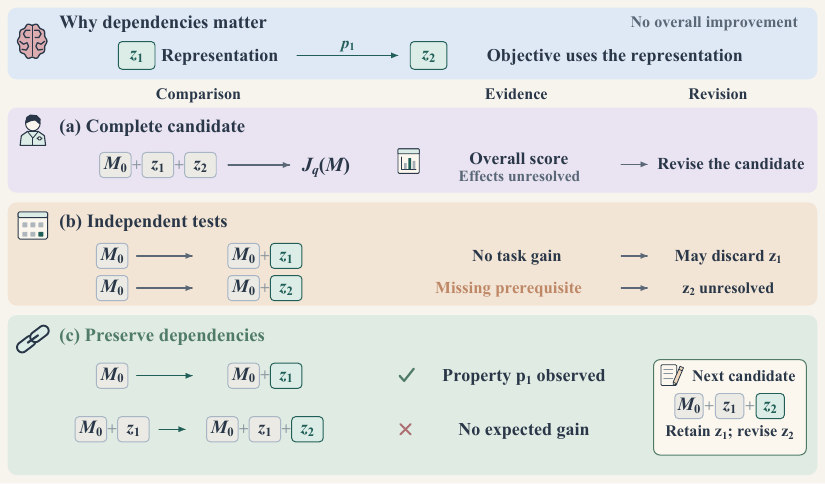}
  \caption{Why dependencies matter. Candidate-level feedback combines decision effects, while independent tests can remove prerequisites. DAGLoop preserves prerequisites in the control. Here $M_0$ is the baseline, $J_q$ the task metric, and $p_1$ a property supplied by $z_1$.}
  \label{fig:feedback_alignment}
\end{figure}

We present \textbf{DAGLoop}, a general-purpose research system that addresses executable method-improvement problems through proposal, implementation, experimentation, and revision. It constructs targeted experiments from a candidate method's design. For each decision, it translates the rationale into a testable expectation: the effect the decision should produce and its required conditions. A targeted experiment specifies what to compare and measure, forming an experimental specification that connects design reasoning to evidence. These specifications define a DAG, whose nodes associate decisions with tests and whose edges encode their dependencies. Each experiment compares a control implementation containing the prerequisites with a treatment implementation that additionally applies the decision: $z_1+z_2$ against $z_1$ in the example. A decision that provides a prerequisite, such as $z_1$, is assessed through the intermediate property it is intended to provide. Effects requiring several decisions are tested jointly, retaining their external prerequisites and attributing experimental support to the group of decisions tested together.

After checking execution validity, DAGLoop records each outcome alongside the decision, expected effect, required conditions, and saved control implementation. This structured research memory guides the agent to retain supported decisions, revise unsupported ones, and plan further experiments when evidence is unresolved. Failed implementations provide debugging information. Revised decisions are assembled into a subsequent candidate and evaluated with the same official evaluator. Targeted experiments guide revision; complete-candidate evaluation measures improvement.

We evaluate DAGLoop on 36 AutoLab tasks spanning CPU optimization, CUDA kernels, program synthesis, and model development \citep{xu2026autolab}, comparing with Terminus-2 using the same language model. Our contributions are threefold:
\begin{itemize}
    \item We identify a mismatch between evaluating complete candidates and revising individual decisions, motivating experiments that retain prerequisites for decision-level evidence.
    \item We propose DAGLoop, a research system that translates design rationales into experimental specifications, constructs a dependency DAG to plan targeted ablations, and uses structured research memory to guide evidence-based revision and further experimentation.
    \item Experiments on 36 AutoLab tasks demonstrate method improvement across diverse executable problems. DAGLoop achieves a mean score of 0.6459, compared with 0.5831 for Terminus-2 using the same language model, with higher means on both CPU and GPU tasks. Additional comparisons with reasoning disabled cover GLM-5.2 and DeepSeek-V4-Flash.
\end{itemize}

\section{Method}
\label{sec:method}
DAGLoop organizes method proposal, implementation, experimentation, and revision into a research loop. The system turns a candidate's design into experimental specifications, uses their dependencies to construct targeted comparisons, and records the outcomes for revision (Figure~\ref{fig:method_overview}). The assembled candidate is evaluated by the official task evaluator.

\begin{figure}[htbp]
  \centering
  \includegraphics[width=0.90\linewidth]{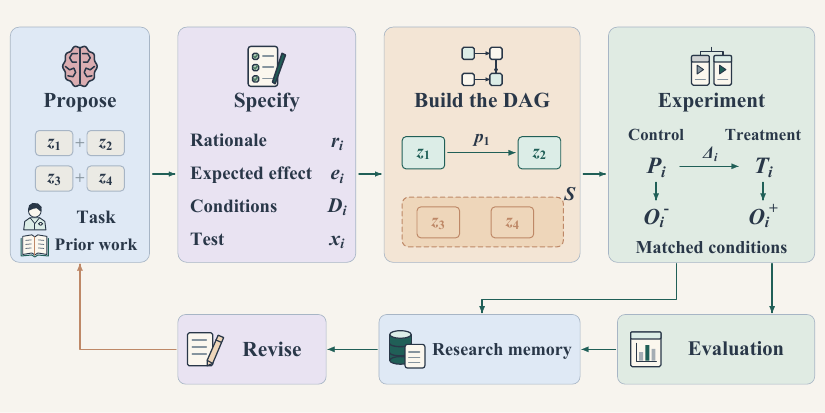}
  \caption{DAGLoop research loop. Each DAG node represents a decision or jointly tested group; $S$ denotes such a group. Control $P_i$ preserves prerequisites; treatment $T_i$ adds the change $\Delta_i$. Their observations $O_i^-$ and $O_i^+$ enter research memory and guide revision. Complete candidates are evaluated separately by the task evaluator, with feedback returned for revision.}
  \label{fig:method_overview}
\end{figure}

\subsection{Task Setup and Experimental Specifications}
\label{sec:preliminary}
\label{sec:problem}
\label{sec:compile}
Starting from the user's research question, DAGLoop clarifies objectives and constraints with the user, retrieves relevant literature, and proposes a method. The user confirms the research direction and method. With an executable baseline, evaluator, environment, and budget established, the system proceeds to implementation, experimentation, and revision.

An executable research task is
\begin{equation}
\mathcal T=(q,M_0,\mathcal E,J_q,B),
\label{eq:task}
\end{equation}
where $q$ describes the task, $M_0$ is an executable baseline, $\mathcal E$ is the experimental environment, and $B$ is the research budget. The official evaluator returns a raw (native) task metric $m_q(M)$, such as accuracy or execution time, and a task score $R_q(M)$. We define $J_q$ so that larger values indicate better performance: $J_q(M)=m_q(M)$ for maximization tasks and $J_q(M)=-m_q(M)$ for minimization tasks. The score $R_q$ follows the official reward rule, including its thresholds and clipping.

Let $\tau$ be the research trajectory, including proposals, implementations, experiments, and revisions, and let $\mathcal C(\tau)$ contain the complete executable candidates it evaluates. We seek
\begin{equation}
\widehat M(\tau)\in
\arg\max_{\substack{M\in\mathcal C(\tau)\cup\{M_0\}\\
                    M\ \text{passes the task checks}}}J_q(M),
\qquad \operatorname{Cost}_{\mathcal E}(\tau)\le B.
\label{eq:taskobjective}
\end{equation}
Task checks enforce correctness and task constraints. Cost includes planning, implementation, targeted tests, repair, repeated measurements, and final evaluation. Improvement means $J_q(\widehat M)>J_q(M_0)$. The baseline remains available when no evaluated candidate improves it.

A candidate method $M$ consists of design decisions $Z=\{z_i\}_{i=1}^n$. The complete-candidate metric $J_q(M)$ evaluates their combined performance, whereas revision requires deciding which $z_i$ to retain or change. To test the effect expected from a particular decision, we associate it with an experimental specification linking its rationale, expected effect, conditions, and test:
\begin{equation}
s_i=(r_i,e_i,D_i,x_i),\qquad
x_i=(\Delta_i,\rho_i,o_i,v_i),
\label{eq:specification}
\end{equation}
where $r_i$ is the rationale, $e_i$ the expected effect, and $D_i$ the required conditions. The experiment $x_i$ specifies the implementation change $\Delta_i$, execution procedure $\rho_i$, quantity to measure $o_i$, and validation rule $v_i$. The specification defines the measurement and criterion before execution; the experiment report supplies the observed value. A decision can provide an intermediate property $p_i$ required by another decision, with a corresponding property check.

Each node in the experimental DAG represents an executable component: one decision or a group of decisions tested together. Each component has an implementation and a specification. Tests assess either improvement in the task metric or an intermediate property required by another decision. We refer to these as \emph{performance tests} and \emph{property checks}, respectively. A checkpoint stores the implementation and artifacts needed to reconstruct a control or candidate.

\paragraph{From a rationale to a test.}
In the representation--objective example, the rationale for $z_1$ identifies a property $p_1$ needed by $z_2$. Its specification makes that property the expected effect and selects an observable that checks whether the representation supplies it. The specification for $z_2$ instead declares a task-metric improvement, with $p_1$ included in its required conditions. These declarations determine both what is measured and which implementation is a meaningful control. A successful property check establishes the context for testing $z_2$; the downstream experiment then assesses its own declared effect. The planner records these roles before execution so that a decision that supplies a prerequisite is assessed by its intended property even when its standalone task metric does not improve in isolation under the same evaluation protocol.

The planner uses a language model to propose decisions, dependencies, and tests from the method, task, literature summary, and baseline evidence. A review agent (the critic) checks whether the tests assess the expected effects under the required conditions. The planner revises the proposal using this feedback. The plan is accepted after review finds no outstanding issues and graph validation passes; Section~\ref{sec:plan} describes the graph and the prerequisite checks used to validate it.

When an effect requires several decisions to act together, the planner groups them into one component with a shared specification and test. The plan records the original decisions and their external dependencies. The result is attributed to the group.

\subsection{Ablation Planning}
\label{sec:plan}

Ablation analysis compares algorithm configurations to assess particular changes~\citep{abramovich2025ablationbench}. Fix the baseline, implementations, and evaluation conditions, and let $f(C)$ denote $J_q$ for the implementation containing a feasible set $C$ of decision indices. For $C$ and $C\cup\{i\}$ satisfying the declared dependencies, the marginal effect of adding decision $i$ is
\begin{equation}
\delta_{i\mid C}=f(C\cup\{i\})-f(C),\qquad i\notin C.
\label{eq:conditionalgain}
\end{equation}
This difference measures the marginal contribution of decision $i$ to configuration $C$~\citep{olsen2024shapley}. For $z_2$ requiring $z_1$, setting $C=\{1\}$ recovers $\delta_{2\mid1}=f(\{1,2\})-f(\{1\})$. Baseline and complete-candidate values do not determine its sign because they leave $f(\{1\})$ unobserved (Proposition~\ref{prop:endpoint}). Testing $z_1$ supplies this control measurement; the property check assesses whether this implementation provides the declared intermediate property.

Let $V$ index the components. The conditions $D_i$ describe what component $i$ requires; $d_i^0$ identifies the other components whose outputs or properties it directly needs. An edge $(j,i)$ denotes such a dependency. Other conditions, such as the data split and measurement settings, are specified in the experimental protocol. The set $K_i$ contains all ancestors of $i$, including indirect prerequisites:
\begin{equation}
\begin{aligned}
A_0&=\{(j,i):j\in d_i^0\},\qquad G_0=(V,A_0),\\
K_i&=\bigcup_{j\in d_i^0}\bigl(\{j\}\cup K_j\bigr).
\end{aligned}
\label{eq:graphclosure}
\end{equation}
Graph validation checks identifiers, acyclicity, and agreement between $K_i$ and the declared prerequisites (Appendix~\ref{app:details}). The control $P_i$ retains prerequisites; treatment $T_i$ adds the change:
\begin{equation}
P_i=\Build(M_0,K_i),\qquad T_i=P_i\oplus\Delta_i.
\label{eq:parentcomparison}
\end{equation}
$\Build$ combines saved implementations of the required components with the baseline. The operator $\oplus$ applies the tested component's implementation change to this control. Implementation review checks that the resulting code follows the specification. Appendix~\ref{app:details} details this merge.

For $z_1\to z_2$, the test compares $z_1+z_2$ with $z_1$, preserving the representation required by the objective. Testing $z_c$ in $z_a\to z_b\to z_c$ retains both $z_a$ and $z_b$. Thus, the control contains every required component together with its own prerequisites.

For a component $u$ representing a group $S$, the experiment applies the decisions together to a control containing their external prerequisites. The shared specification defines the expected effect of this combined change. The result is attributed to the group, and the record identifies all participating decisions without assigning separate gains to individual members. The revised graph must remain acyclic. Appendix~\ref{app:details} gives the construction algorithm and examples; Appendix~\ref{app:proofs} proves that baseline and complete-candidate evaluations alone cannot determine the improvement in \cref{eq:conditionalgain}, and that $K_i$ is the smallest set satisfying the declared dependencies.

\subsection{Executing and Assessing Experiments}
\label{sec:execution}
The procedure $\rho_i$ specifies how to run the experiment; the protocol $\omega_i$ specifies the evaluation conditions and measurement settings shared by control and treatment:
\begin{equation}
O_i^-=\Run(\mathcal E,P_i;\rho_i,\omega_i),\qquad
O_i^+=\Run(\mathcal E,T_i;\rho_i,\omega_i).
\label{eq:matchedrun}
\end{equation}
The records $O_i^-$ and $O_i^+$ link observations to their implementations and execution attempts. Both runs use the validation rule specified in $s_i$. Matching conditions controls data and execution differences; training randomness also affects performance comparisons~\citep{bosma2024reproducibility}.

For valid executions with finite measurements and correctly matched reports, let $m_i^-$ and $m_i^+$ be the control and treatment metrics, and let $d_q\in\{-1,1\}$ denote minimization or maximization. The numerical test criterion is expressed by the predeclared predicate $\phi_i(y)=\mathbf 1[y\ \bowtie_i\ \theta_i]$, with $\bowtie_i\in\{<,\le,=,\ge,>\}$ and threshold $\theta_i$. The resulting criterion separates performance tests from property checks according to the declared experimental objective:
\begin{equation}
\begin{aligned}
\delta_i&=d_q(m_i^+-m_i^-),\\
g_i&=\begin{cases}
\mathbf 1[\delta_i>\max(0,\epsilon_i)]\,\phi_i(y_i^{\rm test}),
&\text{performance test},\\
\phi_i(a_i^{\rm obs}),&\text{property check}.
\end{cases}
\end{aligned}
\label{eq:scalarpredicate}
\end{equation}
A performance test requires improvement beyond zero and the protocol threshold $\epsilon_i$, and applies $\phi_i$ to $y_i^{\rm test}\in\{\delta_i,m_i^+\}$ as specified before execution. A property check applies $\phi_i$ to the reported intermediate observable $a_i^{\rm obs}$. An observable such as the number of failed property checks can summarize several assertions. Appendix~\ref{app:details} gives the full numerical and report-matching checks.

Execution validity and effect assessment are separate. A checkpoint is \emph{qualified} for retention after successful execution, implementation review, task checks, and a passed test, with no pending plan revision. A component is \emph{accepted} when its search ends with a retained qualified checkpoint. Failed execution triggers repair; a valid test that fails its criterion informs revision of the expected effect or design. See Appendices~\ref{app:details} and~\ref{app:states} for details.

\subsection{Research Memory and Revision}
\label{sec:memory}
\label{sec:revision}
For each component, DAGLoop records the specification version, prerequisites, checkpoints, code changes, observations, validation results, and execution artifacts. Each observation remains associated with the decision, expected effect, conditions, and specification version tested. These records supply the evidence for subsequent implementation and method revision.

At research step $t$, let $\mathcal L_t(j)$ record component $j$'s history. For active component $i$, the agent receives
\begin{equation}
\mathcal H_t(i)=\left(
\{\mathcal L_t(j):j\in K_i\cup\{i\}\},\quad
\{\operatorname{Summary}(\mathcal L_t(j)):j\notin K_i\cup\{i\}\}
\right).
\label{eq:memoryprojection}
\end{equation}
Detailed histories of the active component and its ancestors let the agent inspect how the control was built, which prerequisite checks passed, and how the current attempt differs from earlier ones. Summaries of other branches supply context for revising the method without including their full execution traces in each component's input. Failed attempts remain available for debugging; method revision receives the accumulated research memory.

For example, when a valid test of $z_2$ does not support its expected gain, the record identifies the control containing $z_1$. Revision can reconsider the objective while retaining the supported representation. If revision changes $z_1$, the earlier result for $z_2$ remains a record of the previous experiment. The revised representation defines a new control for the objective; testing it again determines whether the expected effect persists after the prerequisite changes.

DAGLoop reuses a saved implementation only when its specification, prerequisite implementations, and recorded execution context remain compatible. Otherwise, the earlier result stays in memory for reference. Appendix~\ref{app:states} gives the equality and checkpoint checks used for reuse.

Search processes an active component first; a pending component becomes ready after its prerequisites are accepted. The agent drafts the first implementation, debugs execution errors or code that does not follow the specification, and reassesses the expectation when a valid test fails its criterion. Otherwise, it improves the implementation. Each operation uses research memory. Replanning can revise the decomposition or group decisions within the revision budget. When no component is ready, the system reports which prerequisites remain unresolved.

Component search retains the best qualified checkpoint. Performance tests rank checkpoints by the task metric; property checks use the specified comparison rule, with task performance breaking ties. Search continues to the attempt limit, retaining the best checkpoint found (Appendix~\ref{app:states}).

\subsection{Complete-Candidate Evaluation}
\label{sec:algorithm}
Once all components are accepted, DAGLoop assembles their checkpoints and runs the official evaluator. Targeted tests assess effects in the prerequisite context, whereas the assembled candidate also includes decisions from other branches. These combinations can change task performance. Complete-candidate evaluation checks whether the supported changes improve the full method, and its result enters research memory to guide the next revision. Only complete candidates that pass the task checks can replace the current best:
\begin{equation}
\widehat M_{t+1}\in
\arg\max_{M\in\{\widehat M_t\}\cup\mathcal C_t^{\rm valid}}J_q(M).
\label{eq:incumbent}
\end{equation}
Here $\mathcal C_t^{\rm valid}$ contains newly evaluated complete candidates. Component measurements guide revision through memory. Algorithm~\ref{alg:loop} summarizes the loop; Appendices~\ref{app:details} and~\ref{app:states} detail the procedure and resource limits used during execution.

\begin{algorithm}[htbp]
\caption{DAGLoop: targeted experiments and revision}
\label{alg:loop}
\begin{algorithmic}[1]
\Require Task, baseline $M_0$, proposed candidate $M$, budget $B$
\State Initialize research memory $\mathcal L$ and best candidate $\widehat M\gets M_0$
\While{budget and search limits permit}
  \State Construct or revise specifications and the dependency DAG using $(M,\mathcal L)$
  \For{each ready component $i$ in topological order}
    \State Search component $i$: build $P_i$, implement and test $T_i$ under the declared protocol
    \State Record validity and effect evidence in $\mathcal L$; debug or revise using this memory
    \State Retain the best qualified checkpoint when component search ends
  \EndFor
  \State If all components are accepted, assemble and evaluate $M^+$; update $\widehat M$
  \State Revise $M$ from $\mathcal L$ if further search is permitted; otherwise stop
\EndWhile
\State \Return $\widehat M,\mathcal L$
\end{algorithmic}
\end{algorithm}

\section{Related Work}

\paragraph{Automated AI Research.}
Recent work automates method proposal, implementation, experimentation, and revision. MLAgentBench studies language agents that conduct machine-learning experiments; The AI Scientist, its extension using agentic tree search, and Agent Laboratory address longer research workflows \citep{huang2023mlagentbench,lu2024aiscientist,yamada2025aiscientistv2,lu2026endtoend,schmidgall2025agentlab}. AlphaEvolve uses executable evaluators to guide program and algorithm discovery \citep{novikov2025alphaevolve}.

Executable benchmarks make this research loop measurable. MLE-bench, MLGym, and RE-Bench evaluate agents on executable tasks; MLR-Bench assesses open-ended research \citep{chan2024mlebench,nathani2025mlgym,wijk2024rebench,chen2025mlrbench}. ScienceAgentBench evaluates scientific code using execution results and cost \citep{chen2025scienceagentbench}. ResearchGym, AutoLab, and AutoResearchEval examine longer research trajectories, artifact improvement, and failures linking experiments to claims \citep{garikaparthi2026researchgym,xu2026autolab,fei2026autoresearcheval}. Our work studies the evidence used to revise design decisions within these executable research settings.

\paragraph{Evaluator Feedback and Research Memory.}
OPRO conditions candidate generation on score histories; FunSearch selects evaluated programs for further search; Automated Design of Agentic Systems (ADAS) and AFlow optimize agent designs and workflows \citep{yang2024opro,romeraparedes2024funsearch,hu2025adas,zhang2025aflow}. MetaReflection uses past reflections; TextGrad and Trace optimize variables using textual and execution feedback \citep{gupta2024metareflection,yuksekgonul2025textgrad,cheng2024trace}. The AI Scientist maintains an experimental journal, Meta-Harness exposes earlier candidates' code, scores, and traces, and ResearchLoop organizes questions, evidence, and claims as durable research state \citep{lu2026endtoend,lee2026metaharness,xia2026researchloop}. AutoResearchEval examines links between implementations, observations, and claims \citep{fei2026autoresearcheval}. Our memory associates each outcome with the tested decision, expected effect, and prerequisite context, so revision can use evidence at the level of the decision being changed.

\paragraph{Ablation Planning and Experimental Attribution.}
AbGen and AblationBench evaluate whether language models can design informative ablations, and The AI Scientist includes ablations in its experiment schedule \citep{zhao2025abgen,abramovich2025ablationbench,lu2026endtoend}. Software composition also requires explicit component dependencies. Cordis represents these dependencies, motivating our description of the properties that one decision requires from another \citep{shi2026cordis}. DAGLoop uses these dependencies to construct the control implementation for each targeted experiment during revision. It distinguishes task improvement from a property required by another decision and uses joint tests when an expected effect requires several decisions. The evidence guides decision retention, revision, and further testing.

\section{Experiments}
\label{sec:experiments}
We evaluate DAGLoop on diverse executable research tasks. We use AutoLab as the main benchmark and compare with locally run Terminus-2 using GLM-5.2 with maximum reasoning effort or reasoning disabled, and DeepSeek-V4-Flash with reasoning disabled. We also evaluate on FML-Bench; its results are reported in Appendix~\ref{app:fmlresults}.

\subsection{Benchmark and Task Protocol}
\label{sec:benchmark}
We use all 36 official AutoLab tasks: 25 CPU tasks and 11 GPU tasks covering CPU optimization, CUDA kernels, program synthesis, and model development \citep{xu2026autolab}. Each task provides a baseline repository, evaluator, reward rule, correctness checks, and protected task files. DAGLoop reproduces the baseline locally, proposes a candidate, constructs the dependency DAG, runs component experiments, and evaluates candidate checkpoints.

The primary comparison uses GLM-5.2, with maximum reasoning effort for both DAGLoop and Terminus-2. The additional comparisons disable reasoning for both agents, using either GLM-5.2 or DeepSeek-V4-Flash. All AutoLab GPU experiments for both agents run on NVIDIA H800 GPUs. The comparison uses the official Terminus-2 agent in Harbor~0.3.0 and the original task verifiers~\citep{harborterminus2}. Both agents run to completion with AutoLab's task-level time budgets disabled.

\label{sec:auditpolicy}
For each DAGLoop configuration, we report one selected result per task. The reasoning-disabled Terminus-2 scores are arithmetic means of two local runs; the reasoning-enabled reference retains the existing reported scores. Section~\ref{sec:algorithm} describes how the system selects its final complete candidate. Appendix~\ref{app:fullresults} reports all task scores and their normalization and aggregation conventions.

\label{sec:metrics}

Higher scores are better. We report task scores and means for the full suite, CPU tasks, and GPU tasks. Reward thresholds and clipping can produce ties despite different native metrics. Native metrics are listed in their original units when available for the selected result; the paired comparison uses task scores. Win, tie, and loss counts compare scores rounded to six decimal places, matching the precision of the reported reference.

\subsection{Overall, Per-Task, and Component Ablation Results}
\label{sec:results}
\begin{table}[!htbp]
\centering
\caption{Mean AutoLab scores for six agent configurations.}
\label{tab:breadthsummary}
\setlength{\tabcolsep}{5pt}
\begin{tabular*}{\linewidth}{@{\extracolsep{\fill}}lllrrr@{}}
\toprule
Agent & Model & Reasoning & All (36) & CPU (25) & GPU (11) \\
\midrule
\textbf{DAGLoop} & GLM-5.2 & Max & \textbf{0.6459} & \textbf{0.6539} & \textbf{0.6277} \\
Terminus-2 & GLM-5.2 & Max & 0.5831 & 0.6247 & 0.4885 \\
\textbf{DAGLoop} & GLM-5.2 & Off & \textbf{0.5485} & \textbf{0.5918} & \textbf{0.4503} \\
Terminus-2 & GLM-5.2 & Off & 0.4266 & 0.4502 & 0.3730 \\
\textbf{DAGLoop} & DeepSeek-V4-Flash & Off & \textbf{0.5340} & \textbf{0.5045} & \textbf{0.6010} \\
Terminus-2 & DeepSeek-V4-Flash & Off & 0.4202 & 0.4492 & 0.3542 \\
\bottomrule
\end{tabular*}
\end{table}

DAGLoop has higher mean scores than Terminus-2 on the full suite and in both resource groups for all three model and reasoning settings (Table~\ref{tab:breadthsummary}). With maximum reasoning effort, the full-suite means are 0.6459 and 0.5831. With reasoning disabled, they are 0.5485 and 0.4266 for GLM-5.2, and 0.5340 and 0.4202 for DeepSeek-V4-Flash. Each mean includes every task in its split, using the score conventions in Appendix~\ref{app:fullresults}. Figure~\ref{fig:improvement_distribution} and the component ablation results in Table~\ref{tab:componentablations} detail the primary comparison; the complete task-level table is reported in Appendix~\ref{app:fullresults}.

In the primary comparison, DAGLoop scores 0.7378 versus 0.0006 on online language-model serving, 0.4637 versus 0.2403 on agent tool routing, and 0.8134 versus 0.6040 on concurrent key-value storage. Terminus-2 leads on instruction-following data selection (0.8581 versus 0.6177) and sorted string table (SSTable) compaction (0.3479 versus 0.1780). Five tasks tie at 1.0000: BM25, multilingual optical character recognition, stack-machine golf, Toy ISA, and VLIW scheduling.

An aggregate task score shows whether a complete candidate improves, but not which design decision produced the change or whether that decision works only with its prerequisites. We therefore use component ablations to evaluate a decision against a matched control that preserves its declared prerequisite implementation. In Table~\ref{tab:componentablations}, the Parent column is this control and Candidate/property is the resulting observation. Direct stages compare the task metric, property stages test an intermediate requirement such as zero failed checks, and integration stages evaluate components jointly when their effect depends on their combination. The table consequently traces evidence along a dependency chain rather than comparing unrelated variants.

\begin{figure}[!htbp]
  \centering
  \includegraphics[width=\linewidth]{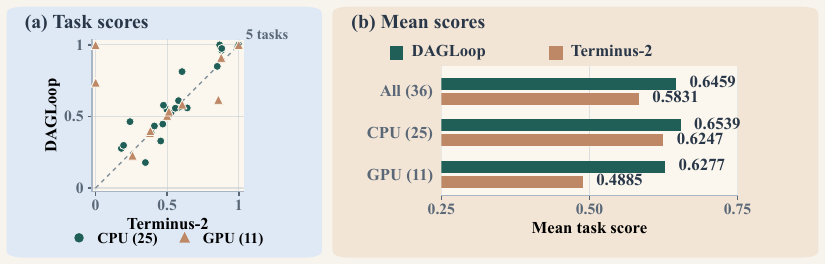}
  \caption{DAGLoop and Terminus-2 with GLM-5.2 and maximum reasoning effort. (a) Paired task scores; points above the diagonal favor DAGLoop. (b) Mean scores for the full suite and CPU/GPU splits; the horizontal axis spans 0.25--0.75. Component ablations are summarized in Table~\ref{tab:componentablations}; the complete task-level table is reported in Appendix~\ref{app:fullresults}.}
  \label{fig:improvement_distribution}
\end{figure}

\begin{table}[H]
\centering
\caption{Five representative multi-component ablation chains from the GLM-5.2 maximum-reasoning records. Each chain contains direct-gain stages, intermediate evidence checks, and a later direct or integrated result. A dash denotes a property check without a task-metric comparison. Integration rows ($\dagger$) evaluate the indicated components jointly under a matched end-to-end protocol and use the recorded joint control as Parent.}
\label{tab:componentablations}
\scriptsize
\setlength{\tabcolsep}{2.0pt}
\begin{tabular*}{\linewidth}{@{\extracolsep{\fill}}llrrlr@{}}
\toprule
Task & Stage / component & Parent & Candidate / property & Metric or check & $\Delta$ \\
\midrule
BVH raytracer & C1 property: SAH BVH build & -- & 0 & failed checks $\downarrow$ & -- \\
BVH raytracer & C2 direct: SAH leaf-size selection & 16.9169 & 0.167806 & runtime (s) $\downarrow$ & $-16.7491$ \\
BVH raytracer & Integration$^\dagger$: C1--C2 & 16.6304 & 0.157895 & runtime (s) $\downarrow$ & $-16.4725$ \\
Adversarial splay & C1 direct: BR sequence & 48,656 & 67,008 & rotations $\uparrow$ & $+18,352$ \\
Adversarial splay & C2 property: simulator replica & -- & 0 & failed checks $\downarrow$ & -- \\
Adversarial splay & C3 direct: local-search refinement & 67,008 & 69,847 & rotations $\uparrow$ & $+2,839$ \\
Adversarial splay & Integration$^\dagger$: C1--C3 & 48,656 & 69,847 & rotations $\uparrow$ & $+21,191$ \\
Agent tool routing & C1 direct: inverted index & 9.9904 & 2.5183 & median runtime (s) $\downarrow$ & $-7.4721$ \\
Agent tool routing & C2 property: quality gate & -- & 0 & failed checks $\downarrow$ & -- \\
Agent tool routing & Integration$^\dagger$: C1--C2 & 10.0050 & 2.3019 & median runtime (s) $\downarrow$ & $-7.7031$ \\
AES-128 CTR & C1 property: AES-NI key schedule & -- & 0 & failed checks $\downarrow$ & -- \\
AES-128 CTR & C2 direct: 8-way AES-NI CTR & 8.2557 & 0.044184 & runtime (s) $\downarrow$ & $-8.211516$ \\
AES-128 CTR & Integration$^\dagger$: C1--C2 & 7.9238 & 0.047219 & runtime (s) $\downarrow$ & $-7.876543$ \\
Hash join & C1 property: direct-index build & -- & 0 & failed checks $\downarrow$ & -- \\
Hash join & C2 direct: AVX2 batch probe & 0.029695 & 0.018936 & runtime (s) $\downarrow$ & $-0.010759$ \\
Hash join & Integration$^\dagger$: C1--C2 & 75.7664 & 0.0191 & runtime (s) $\downarrow$ & $-75.7473$ \\
\bottomrule
\end{tabular*}
\end{table}

Table~\ref{tab:componentablations} illustrates how DAGLoop tracks multi-component dependencies across five representative tasks. For metric comparisons, each component is evaluated against a matched control containing its prerequisite set $K_i$; parent values therefore follow the DAG branch rather than a single linear sequence. The Parent column is the implementation used for comparison, while Candidate/property is the resulting measurement. The arrows indicate the optimization direction: lower runtime is better, higher rotation count is better, and a property check passes when its failed-check count is zero.

In the bounding-volume hierarchy (BVH) chain, the surface-area heuristic (SAH) construction passes its property check, leaf-size selection reduces its matched control from 16.9169\,s to 0.167806\,s, and the integrated candidate reaches 0.157895\,s. In the adversarial-splay chain, the BR sequence raises rotations from 48,656 to 67,008, the simulator replica passes its property check, and local-search refinement adds 2,839 rotations before the joint result reaches 69,847. Agent tool routing shows the same pattern with an inverted index, a passing quality gate, and a further reduction in median runtime after integration.

The AES-128 CTR chain separates a correctness prerequisite from the performance change: the AES-NI key schedule first passes its property check, the 8-way CTR implementation reduces runtime from 8.2557\,s to 0.044184\,s, and the joint run reaches 0.047219\,s from its recorded baseline. The hash-join chain uses the same structure: the direct-index build passes its check, AVX2 batch probing reduces its matched parent from 0.029695\,s to 0.018936\,s, and the end-to-end build-plus-probe run reaches 0.0191\,s. These rows show why a direct component result and a complete-candidate result answer different questions: the former isolates a decision under its prerequisites, while the latter tests the assembled method.

Taken together, these chains show why the ablation record separates direct performance gains, enabling property checks, and joint results. A direct gain supports retaining a decision for the tested context; a passed property check makes a prerequisite available to downstream tests; and a joint result evaluates an interaction that cannot be assigned to either component alone. This distinction lets the agent revise the relevant decision rather than treating every change in the final task score as evidence for every component.

\section{Limitations}
Decision-level evidence assesses the declared effect under the tested conditions. Its reliability depends on the decomposition, prerequisite graph, observable, and the experiment's statistical power. It does not identify a unique causal explanation, and joint tests attribute results to the merged component when member-level isolation is not meaningful. The end-to-end benchmark measures the complete DAGLoop workflow.

DAGLoop addresses candidate-method improvement under executable task constraints. Evidence reuse depends on recorded context: changes to data, hardware, evaluation, or upstream decisions can require new experiments. Repeated optimization against a fixed evaluator can overfit task artifacts. Broader scientific generalization and revision of the research agent itself require separate evaluation.

\section{Conclusion}
We present DAGLoop as a general-purpose research system for executable method improvement. It connects candidate-method revision to targeted experiments on its constituent design decisions. Experimental specifications link each decision's rationale and expected effect to tests that preserve its prerequisites. Structured research memory carries the outcomes into subsequent revisions, while complete-candidate evaluation measures task performance. Preserving prerequisites lets the agent assess intermediate properties and downstream effects in their intended context. Joint experiments assess effects that require several decisions to act together, while the accumulated records guide further tests when the evidence remains unresolved. On 36 AutoLab tasks, DAGLoop obtains a mean task score of 0.6459, compared with 0.5831 for Terminus-2 using GLM-5.2 with maximum reasoning effort. DAGLoop scores higher on 19 tasks, ties on seven, and scores lower on ten. With reasoning disabled, DAGLoop also has higher mean scores for GLM-5.2 (0.5485 versus 0.4266) and DeepSeek-V4-Flash (0.5340 versus 0.4202), under the score conventions in Appendix~\ref{app:fullresults}. The research output records which expectations were supported, under which prerequisites, and which remain unresolved.

\clearpage
\section*{Ethics Statement}
Automated research consumes compute and can amplify errors in experimental interpretation. DAGLoop retains evaluator requirements, experiment records, and the context of each result. The authors remain responsible for resource use, interpretation, and reported claims.

\bibliographystyle{iclr2027_conference}
\bibliography{references}

@inproceedings{huang2023mlagentbench,
  title={{MLA}gent{B}ench: Evaluating Language Agents on Machine Learning Experimentation},
  author={Huang, Qian and Vora, Jian and Liang, Percy and Leskovec, Jure},
  booktitle={Proceedings of the 41st International Conference on Machine Learning},
  volume={235},
  pages={20271--20309},
  publisher={PMLR},
  year={2024},
  url={https://proceedings.mlr.press/v235/huang24y.html}
}

@misc{lu2024aiscientist,
  title={The {AI} Scientist: Towards Fully Automated Open-Ended Scientific Discovery},
  author={Chris Lu and Cong Lu and Robert Tjarko Lange and Jakob Foerster and Jeff Clune and David Ha},
  year={2024},
  eprint={2408.06292},
  archivePrefix={arXiv},
  primaryClass={cs.AI},
  url={https://arxiv.org/abs/2408.06292}
}

@misc{yamada2025aiscientistv2,
  title={The {AI} Scientist-v2: Workshop-Level Automated Scientific Discovery via Agentic Tree Search},
  author={Yutaro Yamada and Robert Tjarko Lange and Cong Lu and Shengran Hu and Chris Lu and Jakob Foerster and Jeff Clune and David Ha},
  year={2025},
  eprint={2504.08066},
  archivePrefix={arXiv},
  primaryClass={cs.AI},
  url={https://arxiv.org/abs/2504.08066}
}

@misc{novikov2025alphaevolve,
  title={{AlphaEvolve}: A coding agent for scientific and algorithmic discovery},
  author={Alexander Novikov and Ng{\^{a}}n V{\~{u}} and Marvin Eisenberger and Emilien Dupont and Po-Sen Huang and Adam Zsolt Wagner and Sergey Shirobokov and Borislav Kozlovskii and Francisco J. R. Ruiz and Abbas Mehrabian and M. Pawan Kumar and Abigail See and Swarat Chaudhuri and George Holland and Alex Davies and Sebastian Nowozin and Pushmeet Kohli and Matej Balog},
  year={2025},
  eprint={2506.13131},
  archivePrefix={arXiv},
  primaryClass={cs.AI},
  url={https://arxiv.org/abs/2506.13131}
}

@inproceedings{chan2024mlebench,
  title={{MLE-bench}: Evaluating Machine Learning Agents on Machine Learning Engineering},
  author={Chan, Jun Shern and Chowdhury, Neil and Jaffe, Oliver and Aung, James and Sherburn, Dane and Mays, Evan and Starace, Giulio and Liu, Kevin and Maksin, Leon and Patwardhan, Tejal and Madry, Aleksander and Weng, Lilian},
  booktitle={International Conference on Learning Representations},
  volume={2025},
  pages={50466--50494},
  year={2025},
  url={https://proceedings.iclr.cc/paper_files/paper/2025/file/7e3767db483c942b883eb4f8cfb74e31-Paper-Conference.pdf}
}

@misc{wijk2024rebench,
  title={{RE-Bench}: Evaluating frontier {AI} R\&D capabilities of language model agents against human experts},
  author={Hjalmar Wijk and Tao Lin and Joel Becker and Sami Jawhar and Neev Parikh and Thomas Broadley and Lawrence Chan and Michael Chen and Josh Clymer and Jai Dhyani and Elena Ericheva and Katharyn Garcia and Brian Goodrich and Nikola Jurkovic and Holden Karnofsky and Megan Kinniment and Aron Lajko and Seraphina Nix and Lucas Sato and William Saunders and Maksym Taran and Ben West and Elizabeth Barnes},
  year={2025},
  eprint={2411.15114},
  archivePrefix={arXiv},
  primaryClass={cs.LG},
  url={https://arxiv.org/abs/2411.15114}
}

@misc{nathani2025mlgym,
  title={{MLGym}: A New Framework and Benchmark for Advancing {AI} Research Agents},
  author={Deepak Nathani and Lovish Madaan and Nicholas Roberts and Nikolay Bashlykov and Ajay Menon and Vincent Moens and Amar Budhiraja and Despoina Magka and Vladislav Vorotilov and Gaurav Chaurasia and Dieuwke Hupkes and Ricardo Silveira Cabral and Tatiana Shavrina and Jakob Foerster and Yoram Bachrach and William Yang Wang and Roberta Raileanu},
  year={2025},
  eprint={2502.14499},
  archivePrefix={arXiv},
  primaryClass={cs.CL},
  url={https://arxiv.org/abs/2502.14499}
}

@inproceedings{chen2025mlrbench,
  title={{MLR-Bench}: Evaluating {AI} Agents on Open-Ended Machine Learning Research},
  author={Chen, Hui and Xiong, Miao and Lu, Yujie and Han, Wei and Deng, Ailin and He, Yufei and Wu, Jiaying and Li, Yibo and Liu, Yue and Hooi, Bryan},
  booktitle={Advances in Neural Information Processing Systems},
  volume={38, Main Conference},
  publisher={Curran Associates, Inc.},
  year={2025},
  doi={10.52202/085713-3945},
  url={https://proceedings.neurips.cc/paper_files/paper/2025/file/ab8dd000d6f87f40061a73f8bca7fae4-Paper-Datasets_and_Benchmarks_Track.pdf}
}

@misc{xu2026autolab,
  title={{AutoLab}: Can Frontier Models Solve Long-Horizon Auto Research and Engineering Tasks?},
  author={Zhangchen Xu and Junda Chen and Yue Huang and Dongfu Jiang and Jiefeng Chen and Hang Hua and Zijian Wu and Zheyuan Liu and Zexue He and Lichi Li and Shizhe Diao and Jiaxin Pei and Jinsung Yoon and Hao Zhang and Mengdi Wang and Radha Poovendran and Misha Sra and Alex Pentland and Zichen Chen},
  year={2026},
  eprint={2606.05080},
  archivePrefix={arXiv},
  primaryClass={cs.AI},
  url={https://arxiv.org/abs/2606.05080}
}

@misc{zou2026fmlbench,
  title={FML-bench: A Controlled Study of {AI} Research Agent Strategies from the Perspective of Search Dynamics},
  author={Qiran Zou and Hou Hei Lam and Wenhao Zhao and Tingting Chen and Yiming Tang and Samson Yu and Yingtao Zhu and Srinivas Anumasa and Zufeng Zhang and Tianyi Zhang and Chang Liu and Zhengyao Jiang and Anirudh Goyal and Dianbo Liu},
  year={2026},
  eprint={2605.17373},
  archivePrefix={arXiv},
  primaryClass={cs.LG},
  url={https://arxiv.org/abs/2605.17373}
}

@misc{garikaparthi2026researchgym,
  title={{ResearchGym}: Evaluating Language Model Agents on Real-World {AI} Research},
  author={Aniketh Garikaparthi and Manasi Patwardhan and Arman Cohan},
  year={2026},
  eprint={2602.15112},
  archivePrefix={arXiv},
  primaryClass={cs.AI},
  url={https://arxiv.org/abs/2602.15112}
}

@misc{fei2026autoresearcheval,
  title={How Do Agents Fail on {AutoResearch}: End-to-End Diagnostic Evaluation on 100 Real-World Frontier Research Tasks},
  author={Yanlin Fei and Nazhou Liu and Xinmiao Yu and Shaolong Chen and Lei Li and Rahul Thapa and Madalina Ciobanu and Navan Preet Singh and Qingqing Mao and Ritankar Das},
  year={2026},
  eprint={2608.14905},
  archivePrefix={arXiv},
  primaryClass={cs.CL},
  url={https://arxiv.org/abs/2608.14905v3}
}

@inproceedings{yang2024opro,
  title={Large Language Models as Optimizers},
  author={Yang, Chengrun and Wang, Xuezhi and Lu, Yifeng and Liu, Hanxiao and Le, Quoc V and Zhou, Denny and Chen, Xinyun},
  booktitle={International Conference on Learning Representations},
  volume={2024},
  pages={12028--12068},
  year={2024},
  url={https://proceedings.iclr.cc/paper_files/paper/2024/file/3339f19c5fcee3ad74502947a32be9e6-Paper-Conference.pdf}
}

@article{romeraparedes2024funsearch,
  title={Mathematical discoveries from program search with large language models},
  author={Romera-Paredes, Bernardino and Barekatain, Mohammadamin and Novikov, Alexander and Balog, Matej and Kumar, M. Pawan and Dupont, Emilien and Ruiz, Francisco J. R. and Ellenberg, Jordan S. and Wang, Pengming and Fawzi, Omar and Kohli, Pushmeet and Fawzi, Alhussein},
  journal={Nature},
  volume={625},
  number={7995},
  pages={468--475},
  year={2024},
  doi={10.1038/s41586-023-06924-6},
  url={https://doi.org/10.1038/s41586-023-06924-6}
}

@inproceedings{hu2025adas,
  title={Automated Design of Agentic Systems},
  author={Hu, Shengran and Lu, Cong and Clune, Jeff},
  booktitle={International Conference on Learning Representations},
  volume={2025},
  pages={21344--21377},
  year={2025},
  url={https://proceedings.iclr.cc/paper_files/paper/2025/file/36b7acf6f6010652b3f2a433774a66fe-Paper-Conference.pdf}
}

@inproceedings{zhang2025aflow,
  title={{AFlow}: Automating Agentic Workflow Generation},
  author={Zhang, Jiayi and Xiang, Jinyu and Yu, Zhaoyang and Teng, Fengwei and Chen, XiongHui and Chen, Jiaqi and Zhuge, Mingchen and Cheng, Xin and Hong, Sirui and Wang, Jinlin and Zheng, Bingnan and Liu, Bang and Luo, Yuyu and Wu, Chenglin},
  booktitle={International Conference on Learning Representations},
  volume={2025},
  pages={34040--34077},
  year={2025},
  url={https://proceedings.iclr.cc/paper_files/paper/2025/file/5492ecbce4439401798dcd2c90be94cd-Paper-Conference.pdf}
}

@inproceedings{cheng2024trace,
  title={{Trace} is the Next {AutoDiff}: Generative Optimization with Rich Feedback, Execution Traces, and {LLMs}},
  author={Cheng, Ching-An and Nie, Allen and Swaminathan, Adith},
  booktitle={Advances in Neural Information Processing Systems},
  volume={37},
  pages={71596--71642},
  publisher={Curran Associates, Inc.},
  year={2024},
  doi={10.52202/079017-2287},
  url={https://proceedings.neurips.cc/paper_files/paper/2024/file/83ba7056bce2c3c3c27e17397cf3e1f0-Paper-Conference.pdf}
}

@misc{lee2026metaharness,
  title={{Meta-Harness}: End-to-End Optimization of Model Harnesses},
  author={Yoonho Lee and Roshen Nair and Qizheng Zhang and Kangwook Lee and Omar Khattab and Chelsea Finn},
  year={2026},
  eprint={2603.28052},
  archivePrefix={arXiv},
  primaryClass={cs.AI},
  url={https://arxiv.org/abs/2603.28052}
}

@misc{abramovich2025ablationbench,
  title={{AblationBench}: Evaluating Automated Planning of Ablations in Empirical {AI} Research},
  author={Talor Abramovich and Gal Chechik},
  year={2026},
  eprint={2507.08038},
  archivePrefix={arXiv},
  primaryClass={cs.CL},
  url={https://arxiv.org/abs/2507.08038}
}

@misc{xia2026researchloop,
  title={{ResearchLoop}: An Evidence-Gated Control Plane for {AI}-Assisted Research},
  author={Yihan Xia and Taotao Wang},
  year={2026},
  eprint={2605.28282},
  archivePrefix={arXiv},
  primaryClass={cs.AI},
  url={https://arxiv.org/abs/2605.28282}
}

@misc{shi2026cordis,
  title={A Programming Paradigm for Spatiotemporal Composability},
  author={Yifan Shi and Wei Zhang and Tianyi Cui},
  year={2026},
  eprint={2608.25512},
  archivePrefix={arXiv},
  primaryClass={cs.PL},
  url={https://arxiv.org/abs/2608.25512}
}

@article{lu2026endtoend,
  title={Towards end-to-end automation of {AI} research},
  author={Lu, Chris and Lu, Cong and Lange, Robert Tjarko and Yamada, Yutaro and Hu, Shengran and Foerster, Jakob and Ha, David and Clune, Jeff},
  journal={Nature},
  volume={651},
  number={8107},
  pages={914--919},
  year={2026},
  doi={10.1038/s41586-026-10265-5},
  url={https://doi.org/10.1038/s41586-026-10265-5}
}

@inproceedings{schmidgall2025agentlab,
  title={Agent Laboratory: Using {LLM} Agents as Research Assistants},
  author={Schmidgall, Samuel and Su, Yusheng and Wang, Ze and Sun, Ximeng and Wu, Jialian and Yu, Xiaodong and Liu, Jiang and Moor, Michael and Liu, Zicheng and Barsoum, Emad},
  booktitle={Findings of the Association for Computational Linguistics: EMNLP 2025},
  pages={5977--6043},
  publisher={Association for Computational Linguistics},
  year={2025},
  doi={10.18653/v1/2025.findings-emnlp.320},
  url={https://aclanthology.org/2025.findings-emnlp.320/}
}

@inproceedings{chen2025scienceagentbench,
  title={{ScienceAgentBench}: Toward Rigorous Assessment of Language Agents for Data-Driven Scientific Discovery},
  author={Chen, Ziru and Chen, Shijie and Ning, Yuting and Zhang, Qianheng and Wang, Boshi and Yu, Botao and Li, Yifei and Liao, Zeyi and Wei, Chen and Lu, Zitong and Dey, Vishal and Xue, Mingyi and Baker, Frazier N. and Burns, Benjamin and Adu-Ampratwum, Daniel and Huang, Xuhui and Ning, Xia and Gao, Song and Su, Yu and Sun, Huan},
  booktitle={International Conference on Learning Representations},
  volume={2025},
  pages={96934--96990},
  year={2025},
  url={https://proceedings.iclr.cc/paper_files/paper/2025/file/f12b4df26344f3be803c06b555252efe-Paper-Conference.pdf}
}

@inproceedings{zhao2025abgen,
  title={{A}b{G}en: Evaluating Large Language Models in Ablation Study Design and Evaluation for Scientific Research},
  author={Zhao, Yilun and Chen, Weiyuan and Xu, Zhijian and Patwardhan, Manasi and Wang, Chengye and Liu, Yixin and Vig, Lovekesh and Cohan, Arman},
  booktitle={Proceedings of the 63rd Annual Meeting of the Association for Computational Linguistics (Volume 1: Long Papers)},
  pages={12479--12491},
  publisher={Association for Computational Linguistics},
  year={2025},
  doi={10.18653/v1/2025.acl-long.611},
  url={https://aclanthology.org/2025.acl-long.611/}
}

@inproceedings{gupta2024metareflection,
  title={{M}eta{R}eflection: Learning Instructions for Language Agents using Past Reflections},
  author={Gupta, Priyanshu and Kirtania, Shashank and Singha, Ananya and Gulwani, Sumit and Radhakrishna, Arjun and Soares, Gustavo and Shi, Sherry},
  booktitle={Proceedings of the 2024 Conference on Empirical Methods in Natural Language Processing},
  pages={8369--8385},
  publisher={Association for Computational Linguistics},
  year={2024},
  doi={10.18653/v1/2024.emnlp-main.477},
  url={https://aclanthology.org/2024.emnlp-main.477/}
}

@article{yuksekgonul2025textgrad,
  title={Optimizing generative {AI} by backpropagating language model feedback},
  author={Yuksekgonul, Mert and Bianchi, Federico and Boen, Joseph and Liu, Sheng and Lu, Pan and Huang, Zhi and Guestrin, Carlos and Zou, James},
  journal={Nature},
  volume={639},
  number={8055},
  pages={609--616},
  year={2025},
  doi={10.1038/s41586-025-08661-4},
  url={https://doi.org/10.1038/s41586-025-08661-4}
}

@article{olsen2024shapley,
  title={A comparative study of methods for estimating model-agnostic Shapley value explanations},
  author={Olsen, Lars Henry Berge and Glad, Ingrid Kristine and Jullum, Martin and Aas, Kjersti},
  journal={Data Mining and Knowledge Discovery},
  volume={38},
  number={4},
  pages={1782--1829},
  year={2024},
  doi={10.1007/s10618-024-01016-z},
  url={https://doi.org/10.1007/s10618-024-01016-z}
}

@inproceedings{bosma2024reproducibility,
  title={Reproducibility of Training Deep Learning Models for Medical Image Analysis},
  author={Bosma, Joeran Sander and Peeters, Dr\'e and Alves, Nat\'alia and Saha, Anindo and Saghir, Zaigham and Jacobs, Colin and Huisman, Henkjan},
  booktitle={Medical Imaging with Deep Learning},
  volume={227},
  pages={1269--1287},
  publisher={PMLR},
  year={2024},
  url={https://proceedings.mlr.press/v227/bosma24a.html}
}

@misc{harborterminus2,
  title={Terminus-2 in Harbor (version 0.3.0)},
  author={{Harbor Framework Team}},
  year={2026},
  url={https://github.com/harbor-framework/harbor/tree/1d97c20a1729f068a2bb95af6100f0f818ef0595},
  note={Official agent implementation, release v0.3.0; accessed September 26, 2026}
}

@misc{autolabrepo,
  title={{AutoLab}: Official Benchmark Repository},
  author={{AutoLab Authors}},
  year={2026},
  url={https://github.com/autolabhq/autolab/tree/4127da3dde8449be61a1cf9859473b9fbbd51751},
  note={Task definitions at commit 4127da3dde8449be61a1cf9859473b9fbbd51751}
}

\appendix

\section{Detailed Experimental Procedure}
\label{app:details}

The executable task and budgeted objective are defined in \cref{eq:task,eq:taskobjective}. The procedures below implement its targeted experiments and method revision.

\subsection{Graph Construction and Comparison Configurations}
Let $N$ convert component names to a consistent identifier format (Appendix~\ref{app:states}). The declared dependencies $d_i^0$ and ancestor sets $K_i$ are defined in \cref{eq:graphclosure}.

The constructor rejects duplicate identifiers, unknown or self-referential dependencies, and cycles. Kahn's topological sort, using a first-in, first-out (FIFO) queue for eligible nodes, gives the order $\pi$ in which the ancestor sets are computed, preserving the input traversal order when nodes become eligible together. The plan also declares $a_i$, the set of components that must be accepted before testing component $i$. The constructor removes redundant direct edges, retaining the parents $d_i$, and requires $a_i$ to equal all ancestors:
\begin{equation}
\begin{aligned}
d_i&=d_i^0\setminus\{j\in d_i^0:\exists k\in d_i^0\setminus\{j\},\ j\in K_k\},\\
a_i&=K_i,\qquad A=\{(j,i):j\in d_i\}.
\end{aligned}
\label{eq:reduceddependencies}
\end{equation}
The equality is a validation requirement: a missing ancestor or an unrelated component causes rejection. Semantic adequacy of the proposed dependencies and tests is assessed by the critic.

\begin{algorithm}[htbp]
\caption{Construction of the dependency DAG}
\label{alg:construct}
\begin{algorithmic}[1]
\Require Declared components $\{(i,d_i^0,a_i,s_i)\}_{i=1}^n$
\State Normalize identifiers with $N$; reject duplicates and invalid references
\State $A_0\gets\{(j,i):j\in d_i^0\}$; $\pi\gets\operatorname{FIFO\text{-}Kahn}(V,A_0)$
\If{$|\pi|\ne |V|$}
  \State \Return cycle error
\EndIf
\For{$i$ in $\pi$}
  \State $K_i\gets\bigcup_{j\in d_i^0}(\{j\}\cup K_j)$
\EndFor
\For{$i$ in $\pi$}
  \State $d_i\gets\{j\in d_i^0:\nexists k\in d_i^0\setminus\{j\},\ j\in K_k\}$
  \If{$a_i$ is missing or $a_i\ne K_i$}
    \State \Return prerequisite mismatch
  \EndIf
\EndFor
\State \Return $G=(V,\{(j,i):j\in d_i\}),\ \pi,\ \{K_i,s_i\}_{i\in V}$
\end{algorithmic}
\end{algorithm}

$\Build$ restores the checkpoints of the required components and combines their files with the baseline. When multiple branches contribute changes, it uses a shared earlier version as the merge base where available and reports conflicting edits. The treatment applies the active component's change to this control. Improvement can start from the component's previous best checkpoint while the control remains fixed. Unrelated branches are excluded. Implementation review checks whether the code follows the specification for the active component.

For a merged component $u$ representing decisions $S$, \cref{eq:parentcomparison} applies with $i=u$ and the shared intervention $\Delta_u$. The planner must supply an acyclic revised graph. For example, merging $z_a$ and $z_c$ in $z_a\to z_b\to z_c$ while retaining both incident dependencies gives $u\to z_b\to u$ and is rejected. The planner then revises the decomposition. \Cref{fig:cap_example} shows single-decision and joint comparisons.

\begin{figure}[htbp]
  \centering
  \includegraphics[width=\linewidth]{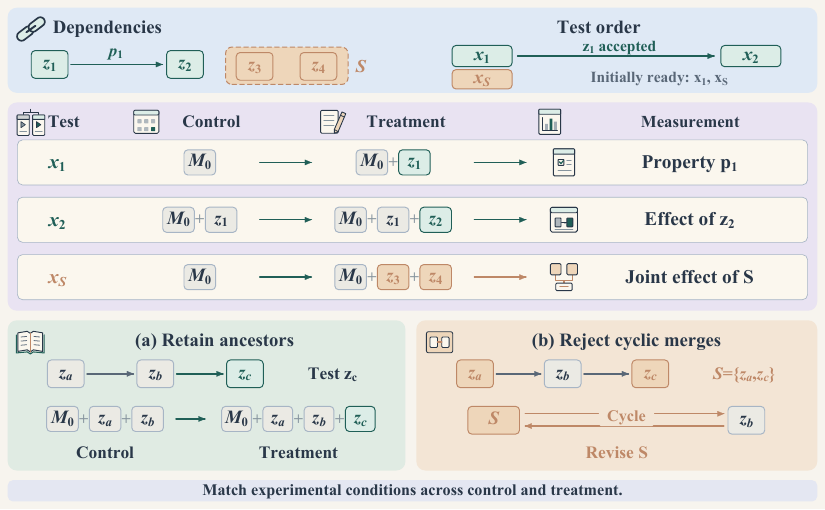}
  \caption{Comparisons that preserve required dependencies. The upstream test checks $p_1$; the downstream test includes $z_1$ in its control. Decisions in $S=\{z_3,z_4\}$ form one merged component with a shared test and result. A cyclic proposed merge is rejected during graph construction.}
  \label{fig:cap_example}
\end{figure}

\subsection{Numerical Criteria and Checkpoint Qualification}
$\mathsf{Fin}(v_1,\ldots,v_k)$ is true when every argument is a finite numerical value and none is Boolean. For a performance test, the predicate specified before execution chooses $y_i^{\rm test}=\delta_i$ or $m_i^+$, and the numerical acceptance criterion is
\begin{equation}
g_i^{\rm dir}=
\mathbf 1[\mathsf{Fin}(m_i^-,m_i^+)]\,
\mathbf 1[\delta_i>0]\,
\phi_i(y_i^{\rm test})\,
\mathbf 1[\delta_i>\epsilon_i].
\label{eq:directgate}
\end{equation}
Here $\epsilon_i$ is the minimum improvement specified by the measurement protocol, with $\epsilon_i=0$ for a test requiring positive improvement. Thus a positive observed gain must also satisfy the declared predicate. For a property check, let $a_i^{\rm obs}$ be its reported intermediate observable and $b_i=1$ indicate exactly one report matching the specified test and linked to its execution record. Its acceptance criterion is
\begin{equation}
g_i^{\rm ena}=b_i\,\mathbf 1[\mathsf{Fin}(a_i^{\rm obs})]\,\phi_i(a_i^{\rm obs}).
\label{eq:enablinggate}
\end{equation}
An aggregate observable, such as the number of failed property checks, can summarize several assertions. The execution script computes that aggregate; the criterion compares it with the threshold specified before execution.

Numerical acceptance and checkpoint qualification are separate. For attempt $t$, let $e_{it}$ indicate successful execution, $h_{it}$ an existing checkpoint, $r_{it}$ a requested plan revision, and $c_{it}$ a review of task constraints that has not blocked the attempt. Method implementation review (MIR), denoted by $\mathrm{MIR}_{it}$, assesses whether the code implements the specified method. Let $g_{it}$ be $g_i^{\rm dir}$ or $g_i^{\rm ena}$ according to the test type. Qualification is
\begin{equation}
q_{it}=e_{it}h_{it}(1-r_{it})c_{it}\,
\mathbf 1[\mathrm{MIR}_{it}\in\{\mathrm{faithful},\mathrm{partial}\}]\,g_{it}.
\label{eq:qualified}
\end{equation}
A partial implementation supports only the implemented and tested scope. Failed attempts remain in the history, while an earlier qualified checkpoint can be retained.

\subsection{Memory Records and Feedback Scheduling}
The component histories record the specification reference, ancestor set, checkpoints, code difference, observations, validation results, and execution artifacts. We write an attempt record as
\begin{equation}
F_{it}=(z_i,\nu_i,e_i,D_i,P_i,O_{it}^-,O_{it}^+,g_{it},q_{it},\mathcal A_{it}),
\label{eq:evidence}
\end{equation}
where $\nu_i$ identifies the specification version and $\mathcal A_{it}$ references supporting artifacts.

Let $\sigma_t(i)\in\{\mathrm{pending},\mathrm{active},\mathrm{accepted},\mathrm{exhausted}\}$ denote search not yet started, search in progress, search ended with a qualified checkpoint, and attempts exhausted without one, respectively. With topological order $\pi$, define
\begin{equation}
\begin{aligned}
\mathcal U_t&=\{i:\sigma_t(i)=\mathrm{active}\},\\
\mathcal R_t&=\{i:\sigma_t(i)=\mathrm{pending},\ \forall j\in d_i,\ \sigma_t(j)=\mathrm{accepted}\},\\
i_t&=\begin{cases}
\min_\pi\mathcal U_t,&\mathcal U_t\ne\varnothing,\\
\min_\pi\mathcal R_t,&\mathcal U_t=\varnothing,\ \mathcal R_t\ne\varnothing,\\
\bot,&\text{otherwise}.
\end{cases}
\end{aligned}
\label{eq:schedule}
\end{equation}
$\min_\pi$ denotes the first component in the stored order; $\bot$ denotes no available component or checkpoint. Thus an active component is processed before a newly ready one.

Let $n_i$ count started attempts, $u_i$ infrastructure attempts that did not start, and $\ell_i$ the last outcome. The next implementation action is
\begin{equation}
\operatorname{Op}(i)=
\begin{cases}
\textsc{Draft},&n_i=u_i=0,\\
\textsc{Debug},&\ell_i\in\mathcal F_{\rm impl},\\
\textsc{Check-Expectation},&\ell_i=\mathrm{contract\_failed},\\
\textsc{Improve},&\text{otherwise}.
\end{cases}
\label{eq:operator}
\end{equation}
Here \texttt{contract\_failed} denotes a valid execution whose result fails the declared test criterion. $\mathcal F_{\rm impl}$ contains execution failure, implementation failure, a missing checkpoint, and code changes that depart from the specification. \textsc{Check-Expectation} reassesses the expected effect when a valid test fails its criterion; \textsc{Debug} repairs execution or implementation errors. The agent uses the selected action and research memory to generate the next implementation. A failed design or test can prompt replanning within the revision budget, including a revised decomposition or a group of decisions tested together.

Component search retains the best qualified checkpoint. Performance tests rank checkpoints by the task metric. Property checks use the specified comparison rule, with task performance breaking ties. Search continues until its attempt limit is reached; \cref{eq:stateupdate} gives the resulting state.

\subsection{Full Research Loop}
Algorithm~\ref{alg:cap} combines the graph, numerical predicates, research memory, and feedback policy. Its construction step applies the planner in Section~\ref{sec:compile} followed by Algorithm~\ref{alg:construct}. Component attempts, execution launches, command durations, and outer revisions have separate limits. A failed execution counts as an attempt; an infrastructure failure before execution has a separate counter. Appendix~\ref{app:states} specifies state updates and evidence reuse.

\begin{algorithm}[htbp]
\caption{Experiments that preserve prerequisites and method revision}
\label{alg:cap}
\begin{algorithmic}[1]
\Require Task $\mathcal T$, baseline $M_0$, candidate $M$, configured resource limits
\State $(G,\pi,K,s)\gets\operatorname{Construct}(M)$ using Algorithm~\ref{alg:construct}
\State $\mathcal L\gets\varnothing$; $\widehat M\gets M_0$; $\sigma(i)\gets\mathrm{pending}$
\While{the execution and revision limits permit another step}
  \State $i\gets i_t$ from \cref{eq:schedule}
  \If{$i\ne\bot$}
    \State $\mathcal H\gets\mathcal H_t(i)$; $o\gets\operatorname{Op}(i)$ by \cref{eq:memoryprojection,eq:operator}
    \State $P_i\gets\Build(M_0,K_i)$; implement $T_i=P_i\oplus\Delta_i$ using $(o,\mathcal H)$
    \State Obtain $O_i^-,O_i^+$ and the available measurement records by \cref{eq:matchedrun}
    \State $(g_{it},q_{it})\gets$ \cref{eq:directgate,eq:enablinggate,eq:qualified}
    \State $\mathcal L(i)\gets\mathcal L(i)\mathbin{\|}F_{it}$; update $n_i,u_i$ and retained checkpoint
    \State Update $\sigma(i)$ by \cref{eq:stateupdate}; select the next action using \cref{eq:operator}
  \Else
    \If{$\forall j\in V,\ \sigma(j)=\mathrm{accepted}$}
      \State $M^+\gets\operatorname{Assemble}(\{\widehat T_j\}_{j\in V})$; evaluate $M^+$
      \State Update $\widehat M$ by \cref{eq:incumbent}; record the complete result
    \Else
      \State Report the blocking or exhausted components
    \EndIf
  \EndIf
  \If{feedback requests a permitted component or method revision}
    \State $M\gets\operatorname{Revise}(M,\mathcal L)$; reconstruct $(G,\pi,K,s)$
    \State Initialize its component histories and reuse checkpoints satisfying \cref{eq:inheritance}
  \Else
    \State Stop if component search is blocked or complete-candidate search has finished
  \EndIf
\EndWhile
\State \Return $\widehat M,\mathcal L$
\end{algorithmic}
\end{algorithm}

\FloatBarrier

\section{Plan and Experiment Records}
\label{app:schema}
For each component, DAGLoop stores its specification and the history of implementation and evaluation attempts. The following fields describe these records.

The design fields specify the component identifier, purpose, mechanism, inputs, outputs, and dependencies. The experiment fields specify the expected effect, test type, and acceptance criterion. The implementation check describes how to assess whether the code follows the design; the optimization space specifies which changes are allowed; the downstream requirements state the properties required by later components. A performance test compares task performance, and a property check checks an intermediate property. Joint tests also record their original decisions. The tuple $s_i$ groups the design and experiment fields for each component.

An attempt records the implementation action, ancestor set, checkpoints, code difference, measurements, implementation review, review of task constraints, and artifact references. Commands and execution records link observations to the corresponding attempt. The record $F_{it}$ in \cref{eq:evidence} summarizes these fields. The research memory in \cref{eq:memoryprojection} preserves detailed current-component and ancestor histories and summarizes other branches.

\section{State Transitions and Evidence Reuse}
\label{app:states}
\paragraph{Graph and semantic checks.}
The identifier map $N$ trims and lowercases strings, replaces runs of non-alphanumeric characters with a hyphen, and assigns an index-based identifier when the result is empty. The constructor computes ancestor sets, removes redundant edges, and validates acceptance prerequisites according to \cref{eq:graphclosure,eq:reduceddependencies}. Inputs and outputs are checked for structure and nonemptiness; the critic evaluates their semantic compatibility and the adequacy of the proposed tests. A merged component passes the ordinary DAG checks after replanning.

\paragraph{Control measurement.}
A root component has no prerequisites and uses the baseline as its control. For a single direct parent, the component history can provide that parent's finite metric. Components with multiple direct prerequisites require a measurement of the combined control implementation. A reported control attempt can supply this value after matching the control implementation, source state, and evaluation settings. Control reuse checks the saved measurement evidence, protocol, and restorable checkpoint. A control value that fails these checks is cleared. Protocol identity is defined by the recorded evaluation configuration.

\paragraph{Numerical predicates and repetitions.}
The criterion for performance tests accepts one comparison of signed metric improvement or candidate metric with a finite numerical constant. The criterion for property checks accepts one scalar comparison, including a reported aggregate such as a failed-check count. Unsupported expressions return a plan-format error. Boolean, missing, and nonfinite values fail the numerical check. Equality is exact unless tolerance is expressed in the declared observable or predicate. The measurement protocol specifies the reported statistic. For repeated measurements, the execution system checks the sample count, finite control and treatment samples, and the specified mean or median against the reported aggregate. These criteria serve as threshold filters for search decisions.

\paragraph{Qualification and retained checkpoints.}
In \cref{eq:qualified}, the task-constraint indicator excludes \texttt{violation} and \texttt{insufficient\_evidence}. Qualification additionally requires successful execution, an existing checkpoint, a faithful or partial implementation review, a passed component test, and no pending plan revision. The latest attempt's test result and the retained checkpoint's test result are stored separately.

Let $\widehat T_i^t$ be the retained qualified checkpoint after attempt $t$ and $N_i$ the component attempt limit. Search continues to this limit, preserving the best qualified checkpoint. After a started attempt, the state is updated as follows:
\begin{equation}
\sigma_{t+1}(i)=\begin{cases}
\mathrm{accepted},&n_i\ge N_i,\ \widehat T_i^t\ne\bot,\\
\mathrm{exhausted},&n_i\ge N_i,\ \widehat T_i^t=\bot,\\
\mathrm{active},&\text{otherwise}.
\end{cases}
\label{eq:stateupdate}
\end{equation}
A non-started attempt instead increments $u_i$ and follows the infrastructure limit. On exhaustion, an existing qualified checkpoint is preserved; otherwise the workflow reports infrastructure exhaustion. For a property check, checkpoints are ranked using the specified comparison direction. Equality tests use distance from the target value, and task performance breaks ties.

\paragraph{Checkpoint reuse.}
Write $c_j$ for a component's recorded specification and acceptance requirements, $h_j$ for its retained checkpoint identity, and $\mathbf h_{d_i}$ for the ordered direct-parent checkpoint identities. An old accepted checkpoint can enter a new component history only if
\begin{equation}
\begin{aligned}
\operatorname{Inherit}(i)={}&\operatorname{ExecMatch}\ \land\ \operatorname{Restorable}(h_i)\\
&\land\;\bigl(\forall j\in K_i\cup\{i\},\ c_j^{\rm new}=c_j^{\rm old}\bigr)\\
&\land\ \bigl(\forall j\in K_i,\ \sigma^{\rm new}(j)=\mathrm{accepted},\
h_j^{\rm new}=h_j^{\rm old}\bigr)\\
&\land\ (\mathbf h_{d_i}^{\rm new}=\mathbf h_{d_i}^{\rm old}).
\end{aligned}
\label{eq:inheritance}
\end{equation}
$\operatorname{ExecMatch}$ compares execution identifiers derived from baseline records. A specification reference identifies the tested version of the design, a protocol identifier denotes the recorded evaluation configuration, and a checkpoint identifier locates the saved implementation. These checks associate reuse with the recorded experiment. Old records remain in history when a checkpoint cannot be reused in the revised context.

\paragraph{Resource limits.}
Resource limits apply to component implementation attempts, experiment launches, individual commands, and revisions of the candidate method. The reference configuration uses six component attempts, six experiment launches per sandbox, 7,200 seconds per command, and three outer revisions. A control measurement counts as an experiment launch. An executed failure consumes a component attempt; a failure before execution increments the separate infrastructure count. Component progress does not consume an outer method-revision round. Cost is recorded, and component selection follows \cref{eq:schedule}.

\section{Proofs}
\label{app:proofs}
\subsection{Baseline and Complete-Candidate Evaluations}
For the metric $f$ on feasible configurations and the marginal effect in \cref{eq:conditionalgain}, we have:
\begin{proposition}[Insufficiency of baseline and complete-candidate evaluations]
\label{prop:endpoint}
For an arbitrary deterministic metric $f$, the baseline value $f(\varnothing)$ and complete-candidate value $f(\{1,2\})$ do not determine the sign of $\delta_{2\mid 1}$.
\end{proposition}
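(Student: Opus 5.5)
The plan is a counterexample construction. Determination would mean: the sign of $\delta_{2\mid 1}=f(\{1,2\})-f(\{1\})$ is a function of the pair $(f(\varnothing),f(\{1,2\}))$. It therefore suffices to exhibit two deterministic metrics $f$ and $f'$ with three properties:
\begin{itemize}
  \item they agree on the baseline and on the complete candidate, $f(\varnothing)=f'(\varnothing)$ and $f(\{1,2\})=f'(\{1,2\})$;
  \item both are defined on the same feasible family $\{\varnothing,\{1\},\{1,2\}\}$, which respects the declared dependency $z_1\to z_2$;
  \item they yield opposite signs of $\delta_{2\mid 1}$.
\end{itemize}
Since $f$ is arbitrary, the only unconstrained value is $f(\{1\})$. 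The whole argument reduces to noting that this value is free relative to the two endpoint observations.

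Concretely, fix any reals $a=f(\varnothing)$ and $b=f(\{1,2\})$. Define $f(\{1\})=b-1$ and $f'(\{1\})=b+1$, keeping the endpoint values equal to $a$ and $b$ in both metrics. Then $\delta_{2\mid 1}=1>0$ under $f$ and $\delta'_{2\mid 1}=-1<0$ under $f'$. The same endpoint pair is therefore compatible with both signs, and the map from endpoint values to sign is not well defined.

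For completeness I would add a few remarks:
\begin{itemize}
  \item Choosing $f(\{1\})=b$ gives $\delta_{2\mid 1}=0$, so all three sign outcomes are attainable.
  \item The construction works for every $(a,b)$, including the case $b>a$ where the complete candidate improves on the baseline. The overall gain splits as $b-a=\delta_{1\mid\varnothing}+\delta_{2\mid 1}$, and either summand may carry it.
  \item Determinism is used only so that $f$ is a well-defined function and noise plays no role.
\end{itemize}

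There is no real obstacle. The only care needed is to respect the paper's definitions: $f$ is defined on feasible configurations, and both $\{1\}$ and $\{1,2\}$ must be feasible, which holds because $z_1$ has no prerequisite and $z_2$'s prerequisite is $z_1$. The proposition is a non-identifiability statement about an arbitrary $f$, not a claim about any particular task, so no realizability of the construction by an actual implementation needs to be shown.
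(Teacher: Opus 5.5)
Your proof is correct and takes the same approach as the paper: you exhibit two metrics that agree on $f(\varnothing)$ and $f(\{1,2\})$ but differ at the unobserved control $f(\{1\})$, which gives opposite signs of $\delta_{2\mid 1}$. The only difference is that the paper fixes the values $f_A=(0,0,1/2)$ and $f_B=(0,1,1/2)$, which keep both metrics in $[0,1]$, whereas your choice $f(\{1\})=b-1$, $f'(\{1\})=b+1$ works for every endpoint pair but does not respect a bounded score range.
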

\begin{proof}
Consider two metrics on the three feasible configurations:
\[
\begin{array}{c|ccc|c}
 &f(\varnothing)&f(\{1\})&f(\{1,2\})&\delta_{2\mid1}\\\hline
f_A&0&0&1/2&1/2\\
f_B&0&1&1/2&-1/2
\end{array}
\]
Both metrics lie in $[0,1]$ and have identical baseline and complete-candidate values. Any rule that receives only those two values has identical inputs in the two cases, although the marginal effects have opposite signs. It therefore cannot determine the sign for all metrics.
\end{proof}

\subsection{Minimal Prerequisite Context}
A set $C\subseteq V$ is \emph{closed under prerequisites} if $j\in C$ implies $d_j^0\subseteq C$.
\begin{proposition}[Minimal prerequisite context]
\label{prop:closure}
For a DAG $G_0$, $K_i$ is the unique smallest set containing $d_i^0$ that is closed under prerequisites. Also, $i\notin K_i$, $K_i\cup\{i\}$ is closed under prerequisites, and the edge reduction in \cref{eq:reduceddependencies} preserves every ancestor set.
\end{proposition}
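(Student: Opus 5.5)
The plan is to argue everything by induction along the topological order $\pi$ produced by Algorithm~\ref{alg:construct}. Because $G_0$ is acyclic, this order exists, and the recursion in \cref{eq:graphclosure} is well founded: each $K_i$ is defined from the sets $K_j$ with $j\in d_i^0$, all of which precede $i$ in $\pi$. The first step is to identify $K_i$ with the set of all proper ancestors of $i$ in $G_0$, meaning all $j$ with a directed path $j\to\cdots\to i$ of length at least one.

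For the inclusion $K_i\subseteq$ ancestors, take $j\in d_i^0$: then $j$ is a parent of $i$, and every element of $K_j$ is an ancestor of $j$ by induction, hence also an ancestor of $i$. For the reverse inclusion, take a path from some $k$ to $i$ and let $j$ be its last edge's tail, so $j\in d_i^0$. Either $k=j$, or $k$ is an ancestor of $j$ and lies in $K_j$ by induction. The identity $i\notin K_i$ then follows at once, since $i\in K_i$ would give a directed cycle through $i$, contradicting acyclicity.

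Closure and minimality come next. For closure, take $j\in K_i$ and $k\in d_j^0$; then $k$ is an ancestor of $j$, hence of $i$, so $k\in K_i$. Also $d_i^0\subseteq K_i$ holds directly from the definition. The set $K_i\cup\{i\}$ is closed for two reasons: $K_i$ is closed, and $d_i^0\subseteq K_i$. For minimality, let $C$ be any closed set with $d_i^0\subseteq C$. An induction on path length shows that every ancestor of $i$ lies in $C$. Parents of $i$ are in $C$ by assumption, and if a path $k\to j\to\cdots\to i$ has $j\in C$, then $k\in d_j^0\subseteq C$ by closure. Hence $K_i\subseteq C$. Uniqueness of the smallest such set is automatic: $K_i$ is itself closed, contains $d_i^0$, and is contained in every closed set containing $d_i^0$, so any two smallest sets must both equal $K_i$.

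The last claim is that edge reduction preserves every ancestor set, which I take to mean that the ancestor sets computed in $G=(V,A)$ equal the $K_i$. Since $d_i\subseteq d_i^0$, ancestors in $G$ are trivially contained in $K_i$. For the converse, I would again induct along $\pi$ and show that each removed parent remains an ancestor of $i$ in $G$. Suppose $j\in d_i^0$ is removed because $j\in K_k$ for some other $k\in d_i^0$. I would choose such a $k$ that is maximal in the ancestor order, that is, one that is not itself in $K_{k'}$ for any other $k'\in d_i^0$. This choice guarantees $k\in d_i$, since a chain of witnesses would otherwise continue, and by acyclicity such a chain cannot go on forever. By the induction hypothesis $K_k$ equals the ancestor set of $k$ in $G$, so $j$ is a $G$-ancestor of $k$ and therefore of $i$. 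Every other element of $K_i$ lies in some $K_j$ with $j\in d_i^0$, so it is a $G$-ancestor of $j$ and hence of $i$.

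I expect this last step to be the main obstacle. The subtle point is that a removed edge's witness $k$ might itself be removed, so the argument needs a maximal element among parents ordered by ancestry. Acyclicity, together with transitivity and irreflexivity of the ancestor relation (the latter from $i\notin K_i$), supplies exactly that maximal element.
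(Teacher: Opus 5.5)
Your proposal is correct and follows essentially the same route as the paper: you identify $K_i$ with the proper ancestors of $i$, obtain closure and minimality by extending or tracing directed paths, use acyclicity for $i\notin K_i$, and handle the edge reduction by induction over $\pi$ with a chain of witnessing parents that must end at a retained parent $k^*\in d_i$. You spell out a few steps the paper leaves implicit, namely both inclusions in the ancestor identification and why the maximal witness still has $j$ in its ancestor set, but the argument has the same structure.
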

\begin{proof}
Every member of $K_i$ lies on a directed path to $i$. If $j\in K_i$ and $k\in d_j^0$, appending the path from $j$ to $i$ shows $k\in K_i$, so $K_i$ is closed under prerequisites. It contains $d_i^0$. Conversely, any set containing $d_i^0$ that is closed under prerequisites must contain every predecessor along each path ending at $i$, hence all of $K_i$. This proves uniqueness and minimality. Acyclicity gives $i\notin K_i$, and adding $i$ preserves closure because $d_i^0\subseteq K_i$.

For the reduction, process vertices in topological order. A removed parent $j\in d_i^0$ is an ancestor of another direct parent $k$. Following this relation among direct parents must terminate at a retained parent $k^*$, since the graph is finite and acyclic. By induction, the reduced graph on the earlier vertices preserves the path from $j$ to $k^*$. The retained edge $(k^*,i)$ therefore preserves reachability from $j$ to $i$. Retained edges are original edges, so the reduction creates no new reachability. Induction over $\pi$ establishes equality of all ancestor sets.
\end{proof}

\section{Definitions of Representative Tasks}
\label{app:taskdefinitions}
This appendix explains representative task objectives and evaluator interfaces for the benchmark results. The task interfaces, measurement boundaries, and reward formulas correspond to the official AutoLab repository at commit \texttt{4127da3}; the public repository supplies the task descriptions~\citep{autolabrepo}. The score comparison uses the reward records in Appendix~\ref{app:fullresults} for all reported tasks.

\subsection{BM25 Retrieval in Go}
\label{app:bm25}
The AutoLab task uses BM25 retrieval with term frequency, inverse document frequency (IDF), and document-length normalization~\citep{autolabrepo}. For query $u$ and document $d$, the task computes the following retrieval score:
\begin{equation}
\begin{aligned}
\operatorname{BM25}(u,d)&=\sum_{w\in\operatorname{uniq}(u)}
\operatorname{IDF}(w)\,
\frac{\operatorname{tf}(w,d)(k_1+1)}
{\operatorname{tf}(w,d)+k_1(1-b+b|d|/\overline L)},\\
\operatorname{IDF}(w)&=\log\left(1+\frac{N-\operatorname{df}(w)+0.5}{\operatorname{df}(w)+0.5}\right),
\quad k_1=1.2,\quad b=0.75.
\end{aligned}
\label{eq:bm25}
\end{equation}
Here $N$ is the corpus size, $\operatorname{df}(w)$ counts documents containing $w$, $\operatorname{tf}(w,d)$ counts its occurrences in $d$, and $\overline L$ is mean document length. $\operatorname{uniq}(u)$ denotes unique query terms, so repeated tokens are counted once; $|d|$ is document length. The output contains at most ten positive-score documents, ordered by decreasing score and increasing document identifier on ties. An empty query returns no results for the retrieval task.

The correctness check compares the batch query count, hit count, and checksum with the reference. Scores are rounded after multiplication by $10^6$ before checksum aggregation. The timing workload uses 4,000 documents and 40 queries, with the median over five query seeds:
\begin{equation}
m_{\rm BM25}(M)=\operatorname{median}_{r=1}^{5}
t\bigl(\operatorname{NewEngine}_M(\mathcal D),
\operatorname{RunQueries}_M(\mathcal U_r)\bigr).
\label{eq:bm25time}
\end{equation}
The measured interval includes index construction, query processing, ranking, and batch checksum computation. It excludes compilation, corpus and query generation, the preceding correctness check, and printing.

For measured seconds $t>0$, the official task reward is
\begin{equation}
R_{\rm BM25}(t)=
\begin{cases}
0,&2.1/t\le1,\\
\min\{1,\tfrac12\log(2.1/t)/\log(2.1/0.03)\},&2.1/t>1.
\end{cases}
\label{eq:bm25reward}
\end{equation}
The scoring anchors $2.1$ and $0.03$ are benchmark constants; local relative improvement uses the locally reproduced baseline instead.

\subsection{Transforms, Joins, Storage, and Model Training}
\paragraph{Fourier transform (\texttt{fft\_rust}).}
For a real signal $x\in\mathbb R^n$ with $n=32768$, the task computes the complex discrete Fourier transform
\begin{equation}
X_k=\sum_{j=0}^{n-1}x_j\exp(-2\pi\mathrm{i}jk/n),\qquad 0\le k<n.
\label{eq:fft}
\end{equation}
The metric is the median duration of the transform call. Parseval and spectral-checksum checks use the task's $10^{-6}$ relative tolerance. The implementation is single-threaded Rust using the standard library for the transform.

\paragraph{Hash join (\texttt{hash\_join}).}
Given indexed integer key--payload tables $R=(r_a)_{a=1}^{n_R}$ and $S=(s_b)_{b=1}^{n_S}$, the task computes the matching row pairs of an inner equijoin
\begin{equation}
\mathcal J=\{(a,b):1\le a\le n_R,\ 1\le b\le n_S,\ r_a.\mathrm{key}=s_b.\mathrm{key}\}.
\label{eq:hashjoin}
\end{equation}
Every matching row pair contributes, including duplicate keys. The output is a match count and a 64-bit checksum. The benchmark uses 20,000 rows in $R$ and five million in $S$, measuring one join call after warm-up under a single-thread constraint.

\paragraph{Number-theoretic transform (\texttt{ntt\_butterfly\_cuda}).}
For $a\in\mathbb F_p^n$, the forward transform has the form
\begin{equation}
A_k=\sum_{j=0}^{n-1}a_j\omega^{jk}\pmod p,
\qquad p=2^{64}-2^{32}+1,
\label{eq:ntt}
\end{equation}
where $\omega$ is the task's forward $n$th root of unity. The CUDA implementation returns canonical field elements in natural order, in place. Correctness is bit-exact, and latency is the median CUDA-event time after warm-up, using the prescribed workspace.

\paragraph{Multi-scalar multiplication (\texttt{msm\_pippenger\_bls12\_381\_cuda}).}
The task maps BLS12-381 $G_1$ points $P_i$ and 256-bit scalars $s_i$ to
\begin{equation}
Q=\sum_{i=1}^{n}[s_i]P_i.
\label{eq:msm}
\end{equation}
The evaluator checks the affine output in the required Montgomery representation and measures median CUDA-event latency. The released starter uses naive double-and-add.

\paragraph{SSTable compaction (\texttt{sstable\_compaction\_rs}).}
Sorted string tables store ordered key--value entries on disk. This task merges several prefix-compressed tables, retains the visible latest value for each key, and removes shadowed or deleted entries. It verifies live-entry count, checksum, and output size. The median of five compaction calls includes decoding, merging, and output construction, while input generation occurs outside the measured interval.

\paragraph{Visual reasoning (\texttt{grpo\_multisource}).}
Group relative policy optimization (GRPO) fine-tunes Qwen2.5-VL-7B using Geometry3K, MathVision, and ChartQA, producing a low-rank adaptation (LoRA) adapter. The final native metric is accuracy on 100 held-out MathVista examples. Let $a$ be this accuracy and $r$ the general visual question answering (VQA) accuracy retention on 50 examples. The task reward is
\begin{equation}
R_{\rm GRPO}(a,r)=\mathbf 1[r\ge0.9]\,
\operatorname{clip}_{[0,1]}\left(\frac{a-0.20}{0.65-0.20}\right).
\label{eq:grporeward}
\end{equation}
Retention is the adapted-to-baseline VQA accuracy ratio, defined as one when baseline VQA accuracy is zero. The smaller local evaluation uses 20 MathVista and 10 VQA examples; the final reward uses the 100-plus-50 evaluation.

\begin{landscape}
\section{Full AutoLab Results}
\label{app:fullresults}
Tables 3--5 report the complete per-task scores and corresponding native metrics across all 36 AutoLab tasks. For Terminus-2, reported scores represent the arithmetic mean across two evaluation runs.
\begingroup
\renewcommand{\arraystretch}{1.0}
\setlength{\LTcapwidth}{\linewidth}
\setlength{\tabcolsep}{5pt}
\begin{longtable}{@{}lrrrrrr@{}}
\caption{All six configurations. Columns G+, G0, and D0 denote GLM-5.2 with maximum reasoning effort, GLM-5.2 with reasoning disabled, and DeepSeek-V4-Flash with reasoning disabled, respectively.}\\
\toprule
 & \multicolumn{3}{c}{\textbf{DAGLoop}} & \multicolumn{3}{c}{Terminus-2} \\
\cmidrule(lr){2-4}\cmidrule(lr){5-7}
Task & G+ & G0 & D0 & G+ & G0 & D0 \\
\midrule
\endfirsthead
\toprule
 & \multicolumn{3}{c}{\textbf{DAGLoop}} & \multicolumn{3}{c}{Terminus-2} \\
\cmidrule(lr){2-4}\cmidrule(lr){5-7}
Task & G+ & G0 & D0 & G+ & G0 & D0 \\
\midrule
\endhead
\texttt{adaptive\_compression} & 0.550410 & 0.457400 & 1.000000 & 0.496800 & 0.395700 & 0.288650 \\
\texttt{adversarial\_splay} & 0.564829 & 0.500000 & 0.500000 & 0.584100 & 0.523450 & 0.568900 \\
\texttt{aes128\_ctr} & 0.610310 & 0.613200 & 0.505300 & 0.579350 & 0.252100 & 0.445350 \\
\texttt{agent\_tool\_routing} & 0.463745 & 0.390800 & 0.250100 & 0.240300 & 0.224500 & 0.198850 \\
\texttt{bm25\_search\_go} & 1.000000 & 0.400800 & 0.358000 & 1.000000 & 0.373550 & 0.537650 \\
\texttt{bvh\_raytracer} & 0.328492 & 0.296100 & 0.311000 & 0.455050 & 0.283400 & 0.263900 \\
\texttt{concurrent\_kv\_wal} & 0.813392 & 0.476400 & 0.182100 & 0.603950 & 0.638350 & 0.475050 \\
\texttt{data\_select\_ifeval} & 0.617745 & 0.377400 & 0.230000 & 0.858050 & 0.031600 & 0.275750 \\
\texttt{discover\_sorting} & 0.850000 & 1.000000 & 1.000000 & 0.850000 & 0.850000 & 0.850000 \\
\texttt{fft\_rust} & 0.522475 & 0.468400 & 0.451600 & 0.526600 & 0.234600 & 0.243750 \\
\texttt{flash\_attention} & 0.446652 & 0.708400 & 0.320800 & 0.469450 & 0.215600 & 0.346700 \\
\texttt{flux2\_klein\_lora} & 1.000000 & 0.365300 & 0.895700 & 0.000000 & 0.247450 & 0.000000 \\
\texttt{fredkin\_sort\_network} & 1.000000 & 0.506700 & 0.666667 & 0.866700 & 0.433300 & 0.533350 \\
\texttt{gaussian\_blur} & 0.557633 & 0.512200 & 0.225800 & 0.557350 & 0.470100 & 0.473200 \\
\texttt{grpo\_multisource} & 0.911100 & 0.866700 & 0.866700 & 0.877750 & 0.822250 & 0.844450 \\
\texttt{hash\_join} & 0.559472 & 0.502400 & 0.477600 & 0.640200 & 0.527400 & 0.567050 \\
\texttt{huffman\_canonical\_decode\_cuda} & 0.384987 & 0.386900 & 0.357900 & 0.379750 & 0.297850 & 0.264150 \\
\texttt{icp\_correspondence\_step\_cuda} & 0.506301 & 0.521300 & 0.509700 & 0.499400 & 0.470000 & 0.292600 \\
\texttt{levenshtein\_distance} & 0.386082 & 0.380500 & 0.381400 & 0.387400 & 0.208150 & 0.190050 \\
\texttt{llm\_online\_serving} & 0.737800 & 0.072300 & 0.905764 & 0.000550 & 0.000500 & 0.000050 \\
\texttt{moving\_mnist\_world\_model} & 0.227964 & 0.212649 & 0.203951 & 0.257882 & 0.320893 & 0.318903 \\
\texttt{msm\_pippenger\_bls12\_381\_cuda} & 0.399109 & 0.481600 & 0.699300 & 0.382400 & 0.189600 & 0.264700 \\
\texttt{multilingual\_ocr} & 1.000000 & 1.000000 & 0.920000 & 1.000000 & 0.636150 & 0.556300 \\
\texttt{ntt\_butterfly\_cuda} & 0.535343 & 0.513300 & 0.406100 & 0.512450 & 0.478350 & 0.496350 \\
\texttt{radix\_sort} & 0.433447 & 0.392100 & 0.349800 & 0.410750 & 0.324300 & 0.285650 \\
\texttt{regex\_engine} & 0.961408 & 0.858700 & 0.722700 & 0.880300 & 0.515300 & 1.000000 \\
\texttt{resnet\_bit\_flip} & 0.975000 & 0.937500 & 0.350000 & 0.881250 & 0.656250 & 0.918750 \\
\texttt{safety\_router} & 0.992128 & 0.921820 & 0.874947 & 0.992128 & 0.992128 & 0.374977 \\
\texttt{scaling\_law} & 0.584402 & 0.155969 & 0.615714 & 0.605381 & 0.608362 & 0.582762 \\
\texttt{sha256\_throughput} & 0.276915 & 0.285300 & 0.244600 & 0.179850 & 0.011700 & 0.020750 \\
\texttt{smallest\_game\_player} & 0.578610 & 0.949100 & 0.121500 & 0.473950 & 0.000000 & 0.000000 \\
\texttt{sstable\_compaction\_rs} & 0.177997 & 0.069600 & 0.182600 & 0.347900 & 0.214550 & 0.077450 \\
\texttt{stack\_machine\_golf} & 1.000000 & 1.000000 & 1.000000 & 1.000000 & 1.000000 & 0.532400 \\
\texttt{toy\_isa\_opt} & 1.000000 & 1.000000 & 0.982000 & 1.000000 & 0.991250 & 0.979650 \\
\texttt{vliw\_scheduler} & 1.000000 & 0.973600 & 0.983000 & 1.000000 & 0.730550 & 0.980550 \\
\texttt{z\_order\_range\_scan} & 0.298172 & 0.193100 & 0.172000 & 0.195300 & 0.188700 & 0.077350 \\
\midrule
Mean (36) & 0.645887 & 0.548543 & 0.534010 & 0.583119 & 0.426609 & 0.420166 \\
\bottomrule
\end{longtable}

\endgroup
\clearpage
\begingroup
\renewcommand{\arraystretch}{1.0}
\setlength{\LTcapwidth}{\linewidth}
\setlength{\tabcolsep}{4pt}
\begin{longtable}{@{}llrrrr@{}}
\caption{CPU native metrics and task scores for the GLM-5.2 reasoning-enabled comparison.}\\
\toprule
Task & Metric & \textbf{DAGLoop} metric & \textbf{DAGLoop} score & Terminus-2 score & Difference \\
\midrule
\endfirsthead
\toprule
Task & Metric & \textbf{DAGLoop} metric & \textbf{DAGLoop} score & Terminus-2 score & Difference \\
\midrule
\endhead
\texttt{adaptive\_compression} & bpb $\downarrow$ & 3.6963 & 0.550410 & 0.496800 & $+0.053610$ \\
\texttt{adversarial\_splay} & Rotations $\uparrow$ & 69847 & 0.564829 & 0.584100 & $-0.019271$ \\
\texttt{aes128\_ctr} & Time (s) $\downarrow$ & 0.047219 & 0.610310 & 0.579350 & $+0.030960$ \\
\texttt{agent\_tool\_routing} & Time (s) $\downarrow$ & 0.471374 & 0.463745 & 0.240300 & $+0.223445$ \\
\texttt{bm25\_search\_go} & Time (s) $\downarrow$ & 0.000339 & 1.000000 & 1.000000 & $+0.000000$ \\
\texttt{bvh\_raytracer} & Time (s) $\downarrow$ & 0.157895 & 0.328492 & 0.455050 & $-0.126558$ \\
\texttt{concurrent\_kv\_wal} & Time (s) $\downarrow$ & 0.284784 & 0.813392 & 0.603950 & $+0.209442$ \\
\texttt{discover\_sorting} & Comparators $\downarrow$ & 63 & 0.850000 & 0.850000 & $+0.000000$ \\
\texttt{fft\_rust} & Time (s) $\downarrow$ & 0.000661 & 0.522475 & 0.526600 & $-0.004125$ \\
\texttt{flash\_attention} & Time (s) $\downarrow$ & 0.123984 & 0.446652 & 0.469450 & $-0.022798$ \\
\texttt{fredkin\_sort\_network} & Gates $\downarrow$ & 53 & 1.000000 & 0.866700 & $+0.133300$ \\
\texttt{gaussian\_blur} & Time (s) $\downarrow$ & 0.160011 & 0.557633 & 0.557350 & $+0.000283$ \\
\texttt{hash\_join} & Time (s) $\downarrow$ & 0.0191 & 0.559472 & 0.640200 & $-0.080728$ \\
\texttt{levenshtein\_distance} & Time (s) $\downarrow$ & 0.479385 & 0.386082 & 0.387400 & $-0.001318$ \\
\texttt{radix\_sort} & Time (s) $\downarrow$ & 0.491702 & 0.433447 & 0.410750 & $+0.022697$ \\
\texttt{regex\_engine} & Time (s) $\downarrow$ & 0.101679 & 0.961408 & 0.880300 & $+0.081108$ \\
\texttt{resnet\_bit\_flip} & Bits $\downarrow$ & 3 & 0.975000 & 0.881250 & $+0.093750$ \\
\texttt{safety\_router} & Params $\downarrow$ & 131 & 0.992128 & 0.992128 & $+0.000000$ \\
\texttt{sha256\_throughput} & Time (s) $\downarrow$ & 0.526313 & 0.276915 & 0.179850 & $+0.097065$ \\
\texttt{smallest\_game\_player} & Params $\downarrow$ & 7553 & 0.578610 & 0.473950 & $+0.104660$ \\
\texttt{sstable\_compaction\_rs} & Time (s) $\downarrow$ & 0.072334 & 0.177997 & 0.347900 & $-0.169903$ \\
\texttt{stack\_machine\_golf} & Instructions $\downarrow$ & 1024 & 1.000000 & 1.000000 & $+0.000000$ \\
\texttt{toy\_isa\_opt} & Cycles $\downarrow$ & 1543 & 1.000000 & 1.000000 & $+0.000000$ \\
\texttt{vliw\_scheduler} & Cycles $\downarrow$ & 1200 & 1.000000 & 1.000000 & $+0.000000$ \\
\texttt{z\_order\_range\_scan} & Time (s) $\downarrow$ & 0.128334 & 0.298172 & 0.195300 & $+0.102872$ \\
\bottomrule
\end{longtable}

\endgroup
\clearpage
\begingroup
\renewcommand{\arraystretch}{1.03}
\setlength{\LTcapwidth}{\linewidth}
\setlength{\tabcolsep}{4pt}
\begin{longtable}{@{}llrrrr@{}}
\caption{GPU native metrics and task scores for the GLM-5.2 reasoning-enabled comparison.}\\
\toprule
Task & Metric & \textbf{DAGLoop} metric & \textbf{DAGLoop} score & Terminus-2 score & Difference \\
\midrule
\endfirsthead
\toprule
Task & Metric & \textbf{DAGLoop} metric & \textbf{DAGLoop} score & Terminus-2 score & Difference \\
\midrule
\endhead
\texttt{data\_select\_ifeval} & IFEval $\uparrow$ & 0.441774 & 0.617745 & 0.858050 & $-0.240305$ \\
\texttt{flux2\_klein\_lora} & Quality $\uparrow$ & 0.6479 & 1.000000 & 0.000000 & $+1.000000$ \\
\texttt{grpo\_multisource} & Accuracy $\uparrow$ & 0.61 & 0.911100 & 0.877750 & $+0.033350$ \\
\texttt{huffman\_canonical\_decode\_cuda} & Time (ms) $\downarrow$ & 6.04928 & 0.384987 & 0.379750 & $+0.005237$ \\
\texttt{icp\_correspondence\_step\_cuda} & Time (ms) $\downarrow$ & 0.26144 & 0.506301 & 0.499400 & $+0.006901$ \\
\texttt{llm\_online\_serving} & Serving $\uparrow$ & 1.0812 & 0.737800 & 0.000550 & $+0.737250$ \\
\texttt{moving\_mnist\_world\_model} & PSNR (dB) $\uparrow$ & 15.367783 & 0.227964 & 0.257882 & $-0.029918$ \\
\texttt{msm\_pippenger\_bls12\_381\_cuda} & Time (ms) $\downarrow$ & 91.878334 & 0.399109 & 0.382400 & $+0.016709$ \\
\texttt{multilingual\_ocr} & CER $\downarrow$ & 0.2893 & 1.000000 & 1.000000 & $+0.000000$ \\
\texttt{ntt\_butterfly\_cuda} & Time (ms) $\downarrow$ & 0.934432 & 0.535343 & 0.512450 & $+0.022893$ \\
\texttt{scaling\_law} & PPL $\downarrow$ & 20.909189 & 0.584402 & 0.605381 & $-0.020979$ \\
\bottomrule
\end{longtable}
\noindent PSNR = peak signal-to-noise ratio; CER = character error rate; PPL = perplexity.

\endgroup
\end{landscape}

\clearpage
\section{Additional FML-Bench Results}
\label{app:fmlresults}
We additionally ran DAGLoop on 18 FML-Bench tasks with the task and repository specified~\citep{zou2026fmlbench}. The results cover 13 GPU tasks and five CPU tasks. Table~\ref{tab:fmlresults} reports their native metrics separately from AutoLab. For each task, we report one available native result alongside the baseline from the same run.

\begin{table}[H]
\centering
\caption{FML-Bench native results with corresponding local baselines.}
\label{tab:fmlresults}
\begingroup
\setlength{\tabcolsep}{3.1pt}
\renewcommand{\arraystretch}{1.18}
\begin{tabular*}{\linewidth}{@{\extracolsep{\fill}}llrr@{}}
\toprule
Task & Metric & Baseline & \textbf{DAGLoop}\\
\midrule
\multicolumn{4}{@{}l}{\textbf{GPU tasks (13)}}\\[2pt]
Opacus & Acc. $\uparrow$ & 0.5875 & \textbf{0.6024}\\
DomainBed-CM & In-acc. $\uparrow$ & 0.2613 & \textbf{0.7010}\\
ART & Defense $\uparrow$ & 0.3131 & \textbf{0.8719}\\
OpenOOD & AUROC $\uparrow$ & 0.8905 & \textbf{0.9209}\\
PyCIL & Inc.-acc. $\uparrow$ & 0.5957 & \textbf{0.6016}\\
USB & Acc. $\uparrow$ & 0.0783 & \textbf{0.0961}\\
EasyFSL & Acc. $\uparrow$ & 0.6534 & \textbf{0.7034}\\
PFLlib & Acc. $\uparrow$ & 0.4530 & \textbf{0.5211}\\
DomainBed-OH & Avg.-acc. $\uparrow$ & 0.5962 & \textbf{0.6280}\\
PrivacyMeter & AUC gap $\downarrow$ & 0.3177 & \textbf{0.0252}\\
Lightly & Acc. $\uparrow$ & 0.7560 & \textbf{0.7886}\\
Solo-learn & Linear acc. $\uparrow$ & 0.4667 & \textbf{0.4879}\\
OpenUnlearning & KS $p$ $\uparrow$ & $\approx 0$ & \textbf{0.3890}\\
\midrule
\multicolumn{4}{@{}l}{\textbf{CPU tasks (5)}}\\[2pt]
Fairlearn & $|\mathrm{DP}|$ $\downarrow$ & 0.1866 & \textbf{0.0029}\\
gCastle & SHD $\downarrow$ & 72 & \textbf{24}\\
AIF360 & $|\mathrm{AOD}|$ $\downarrow$ & 0.2399 & \textbf{0.0209}\\
CausalML & MAE $\downarrow$ & 1.2963 & \textbf{1.2680}\\
Cont.-Learn. & Avg.-acc. $\uparrow$ & 0.2841 & \textbf{0.5935}\\
\bottomrule
\end{tabular*}
\endgroup
\vspace{3pt}
\begin{minipage}{\linewidth}
CM and OH denote ColoredMNIST and OfficeHome, respectively. \textbf{Metrics:} DP = demographic-parity difference; AOD = average odds difference; SHD = structural Hamming distance. Acc. = accuracy; In-acc. = mean accuracy on the splits designated ``in'' by DomainBed; Inc.-acc. = average incremental accuracy; Linear acc. = linear evaluation accuracy; Avg.-acc. = average accuracy. AUROC = area under the receiver operating characteristic curve; AUC gap = the task-defined area-under-curve gap; MAE = mean absolute error; KS $p$ = Kolmogorov--Smirnov test $p$-value. Defense denotes the task-defined defense score.
\end{minipage}
\end{table}

\end{document}